\documentclass{tlp}

\usepackage{amssymb}
\usepackage{hyperref}

\usepackage{amsmath}
\usepackage{graphicx}
\usepackage{multirow}

\usepackage{placeins}

\let\proof\relax

\usepackage{amsthm}

\newtheorem{theorem}{Theorem}[section]
\newtheorem*{theorem*}{Theorem}
\newtheorem{corollary}[theorem]{Corollary}
\newtheorem{proposition}[theorem]{Proposition}

\theoremstyle{definition}

\newtheorem{definition}[theorem]{Definition}

\usepackage{subcaption}

\usepackage{algorithm}
\usepackage[noend]{algpseudocode}
\algnewcommand\algorithmicforeach{\textbf{for each}}
\algdef{S}[FOR]{ForEach}[1]{\algorithmicforeach\ #1\ \algorithmicdo}

\usepackage{mathtools}
\usepackage{bm}
\usepackage{esvect}

\newcommand{\point}{{\ensuremath{P}}}
	
\newcommand{\PointSet}{{\ensuremath{\cal{P}}}}
\newcommand{\distSym}{\ensuremath{d}}
\newcommand{\dist}[2]{{\ensuremath{\distSym(#1,#2)}}}
\newcommand{\segment}[2]{{\ensuremath{\overline{#1 #2}}}}
\newcommand{\StraightLine}[2]{{\ensuremath{\overleftrightarrow{#1 #2}}}} \newcommand{\Path}{{\ensuremath{p}}} \newcommand{\PathFromTo}[2]{{\ensuremath{{\Path}_{#1-#2}}}} \newcommand{\Circuit}{{\ensuremath{c}}}
\newcommand{\Length}[1]{{\ensuremath{L(#1)}}}

\newcommand{\seq}{{\ensuremath{s}}} \newcommand{\weight}[1]{{\ensuremath{w(#1)}}}
\newcommand{\Npoints}{{\ensuremath{n}}}	

\newcommand{\ClusterSet}[1]{{\ensuremath{C (#1)}}}
\newcommand{\ClusterOp}[1]{{\ensuremath{C (#1)}}}
\newcommand{\Vicini}[1]{{\ensuremath{{\cal N} (#1)}}}

\newcommand{\Next}{{\ensuremath{\mathit{Next}}}}	
\newcommand{\Prev}{{\ensuremath{\mathit{Prev}}}}	

\newcommand{\nocrossing}{{\tt nocrossing}}
\newcommand{\Dom}[1]{{\ensuremath{Dom(#1)}}}

\newcommand{\Angle}[1]{{\ensuremath{\angle #1}}}
\newcommand{\ConvexHull}[1]{{\ensuremath{{\cal H}(#1)}}}
\newcommand{\Hull}{{\ensuremath{H}}}	

\newcommand{\HullSet}[1]{{\ensuremath{\partial{\cal H}(#1)}}}	
\newcommand{\HullSetP}{{\ensuremath{\partial{\cal H}(\PointSet)}}}
\newcommand{\HullSetI}{{\ensuremath{\partial{\cal H}(I)}}}	
\newcommand{\NHull}{{\ensuremath{\vert \HullSetP \vert}}} 
\newcommand{\HSucc}[2]{\ensuremath{S(#1)}}

\newcommand{\circuit}{{\tt circuit}}
\newcommand{\alldifferent}{{\tt alldifferent}}

\newcommand{\minAlpha}{\ensuremath{\underline{\alpha}}}
\newcommand{\maxBeta}{\ensuremath{\overline{\beta}}}

\newcommand{\ECLiPSe}{ECL$^i$PS$^e$}

\newcommand{\noi}{{GEO}}
\newcommand{\loro}{{HK-RMC}}
\newcommand{\noipiuloro}{{GEO+HK-RMC}}

\newcommand{\reference}{ECLP}

\newcommand{\maxregret}{{\tt MAX-REGRET}}
\newcommand{\lcfirst}{{\tt LC\_FIRST MAX\_COST}}

\newcommand{\NP}{\textnormal{NP}}

\newcommand{\NPH}{\textsc{\NP-hard}}

\newcommand{\Constraints}{\mathcal{C}}

\newcommand{\Domains}{\mathcal{D}}
\newcommand{\Variables}{\mathcal{X}}
\usepackage{mathrsfs} 
\newcommand{\CSProblem}{{\ensuremath{\mathscr{P}}}}

\usepackage{acronym}
\newacro{TSP}{Traveling Salesperson Problem}
\newacro{TSPTW}{TSP with Time Windows}
\newacro{CP}{Constraint Programming}
\newacro{CSP}{Constraint Satisfaction Problem}
\newacro{COP}{Constraint Optimization Problem}
\newacro{PTAS}{Polynomial Time Approximation Scheme}
\newacro{VRP}{Vehicle Routing Problem}
\newacro{CLP}{Constraint Logic Programming}
\newacro{LHS}{Latin Hypercube Sampling}
\newacro{RF}{Random Forest}
\newacro{MLP}{Multi-Layer Perceptron}
\newacro{NN}{Neural Network}
\newacro{CLP(FD)}{CLP on Finite Domains}
\newacro{LKH}{Lin-Kernighan-Helsgaun}
\newacro{WCC}{Weighted Circuit Constraint}
\newacro{ILP}{Integer Linear Programming}
\newacro{SCC}{Strongly Connected Components}
\newacro{MST}{Minimum Spanning Tree}
\newacro{ETSP}{Euclidean Traveling Salesperson Problem}
\newacro{CNF}{Conjunctive Normal Form}
\newacro{DNF}{Disjunctive Normal Form}
\newacro{PUR}{Positive Unit Resolution}
\newacro{AI}{Artificial Intelligence}
\newacro{LP}{Logic Programming}
\newacro{GAC}{Generalized Arc Consistency}
\newacro{GTSP}{Generalized Traveling Salesperson Problem}
\newacro{EGTSP}{Euclidean Generalized Traveling Salesperson Problem}

\begin{document}

\lefttitle{Bertagnon A. and Gavanelli M.}

\jnlPage{1}{8}
\jnlDoiYr{2021}
\doival{10.1017/xxxxx}

\title[Enhanced Filtering Algorithms for the ETSP and its variants in CLP]
{Enhanced Filtering Algorithms for the Euclidean Traveling Salesperson Problem and its variants in Constraint Logic Programming}

\begin{authgrp}
\author{ALESSANDRO BERTAGNON}
\affiliation{Department of Environmental and Prevention Sciences, University of Ferrara, Italy}
\author{MARCO GAVANELLI}
\affiliation{Department of Engineering, University of Ferrara, Italy}
\end{authgrp}


\maketitle

\begin{abstract}
The \ac{TSP} is one of the best-known problems in computer science
and arises in many engineering applications, such as smart vehicles and intelligent transportation
systems. In the ``Euclidean'' case, each node is defined by its coordinates in the plane and distances are computed using the Euclidean metric. In the Constraint Programming (CP) literature, the Euclidean TSP is typically addressed by computing the full distance matrix and treating it as a general case; however this approach ignores the geometric information carried by the points' coordinates. In this work, we propose new filtering algorithms, implemented in \ac{CLP}, that exploit such geometric information to achieve stronger constraint propagation than existing approaches. Moreover, we show how this methodology can be extended to other Euclidean variants of the \ac{TSP}, including the Euclidean Generalized Traveling Salesperson Problem (EGTSP), which is relevant in practical routing and logistics applications. Experimental results demonstrate the computational advantages of the proposed approach.
\end{abstract}

\begin{keywords}
Euclidean Traveling Salesperson Problem, Constraint Logic Programming, Geometric Filtering Algorithms, Euclidean Generalized Traveling Salesperson Problem
\end{keywords}


\acresetall 

\section{Introduction}
\sloppy

Many engineering applications, including intelligent transportation systems, logistics, smart vehicles, and pickup-and-delivery services, require solving complex routing problems. These problems involve a vehicle or fleet of vehicles reaching a set of destinations while minimizing travel distance to save time, money, energy, and, in some cases, reduce pollution.

Although each application has its own peculiarities, the problem formulation shares an omnipresent core, named in the literature the {\em \ac{TSP}}; the name comes from the problem's most famous formulation: \textit{``A salesman has to visit a set of cities, each of which must be visited only once, and he wants to minimize the length of the tour"}.
More formally, given a weighted graph, the \ac{TSP} requires to compute the minimum cost cycle that visits each vertex exactly once.

Most routing problems are intractable, typically NP-hard.
While heuristic approaches can quickly provide good solutions in some cases, systematic algorithms are preferred in others. These algorithms can find the optimal solution, complete with a proof of optimality (in the eXplainable AI spirit) given enough time.
Clearly, obtaining the provably optimal solution adds significant value.

The basic TSP problem attracted the attention of researchers in different research areas, including operations research and artificial intelligence,
with solution methods ranging from Integer Linear Programming, to Constraint Programming, to metaheuristics, just to name a few
\citep{CaseauKoppstein93, DBLP:conf/cpaior/MelgarejoLS15, DBLP:conf/aaai/DucommanCP16, Boudreault2021}.
Beside the basic TSP problem, engineering applications involve a series of other requirements; the need to take into account such
requirements spawned a plethora of
extensions to the TSP, e.g., requiring each node to be visited in a time window (TSP with time windows), assuming that more than one agent (or vehicle) is available (Multiple TSP),
or that not all nodes must be visited.
For example, in the \ac{GTSP} \citep{LAPORTE1987,LAPORTE1981} (also known as \emph{set \ac{TSP}}) 
the nodes of the graph are partitioned into various sets, called \emph{clusters}; an optimal solution of the \ac{GTSP} is a minimum cost cycle that visits at least one node in each cluster.
The \ac{GTSP} models situations in which it is necessary to collect resources in an environment where several equivalent providers are available for each resource type, while minimizing travel time, or energy requirements from a battery.

In the theoretical formulation of the TSP, the weights on the arcs can take any positive value without restrictions;
however in most practical cases it is reasonable to assume that the distance function obeys the
triangular inequality, where going directly from a node $A$ to a node $B$ cannot be more expensive than taking a detour to a third node $C$: 
such case is named {\em metric TSP} in the literature.
Even more, in many industrial applications,
the nodes are associated to points in the plane, they are connected through straight lines, 
and the distance is the length of the segment connecting the two nodes; this case is named the {\em \ac{ETSP}}, and many instances in the TSPLIB~\citep{tsplib} set of benchmarks fall in this category.
The techniques proposed in this paper are intended for complete Euclidean instances. They do not cover routing settings in which crossings are unavoidable, or in which replacing crossing connections with uncrossed ones would produce infeasible links, as may occur in real road networks. Such settings are outside the scope of the
present work.

From the computational complexity point of view, both the metric and the Euclidean TSP are \NPH~\citep{TSPeuclideoNPhard},
even though,
differently from the general TSP, the Euclidean TSP admits a \ac{PTAS} \citep{Arora_PTAS,Mitchell_PTAS}.

Currently, the best solver for the \ac{TSP} is the mathematical programming solver Concorde \citep{concorde}.  Concorde is based on \ac{ILP} techniques such as branch-and-cut
and it was used to optimally solve instances with tens of thousands of nodes.
However, most of the algorithms developed for solving the \ac{TSP}, including Concorde, can only be used when addressing the basic TSP, whereas when there are other additional constraints, it becomes necessary to adopt more general solving techniques. 

A general technique developed in Artificial Intelligence to solve \acp{COP} is
constraint propagation \citep{AC3}.
The idea was so successful that new languages were embedding it as part of their operational semantics;
for example the \ac{CLP} \citep{JaffarMaher94} class of languages
was defined to declaratively address constraint problems,
and many languages of this class exploited constraint propagation to efficiently solve constraint problems.
The \ac{CLP} research area was later extended to include also imperative and object-oriented programming languages, generating the \ac{CP} research area.

In \ac{CP}, programmers can easily define constraint optimization problems, including side constraints, without being limited to linear or convex constraints or objective functions. This flexibility makes CP ideal for solving industrial optimization problems. Over the years, numerous solvers have been developed, and today, a variety of open-source and commercial solvers are available and widely used in both academic and commercial applications.

Many works in the \ac{CP} literature address the \ac{TSP} \citep{CaseauLaburthe,KayaHooker,vanHoeveEtAl_ImprovedWeightedCircuit,DBLP:journals/corr/abs-1206-3437,SalesmanAndTree,DBLP:conf/cpaior/DeudonCLAR18}; the usual way to tackle Euclidean \acp{TSP} is to compute the distance matrix and address the problem as a general \ac{TSP}.
On the other hand, this approach disregards all the additional information available in the \ac{ETSP}: the coordinates of the points to be visited are known, and geometrical concepts (straight line segments, angles, etc.) can be defined in the Euclidean plane.
An interesting research question is: {\em ``Is it possible to exploit the additional information intrinsically available in Euclidean instances in order to speed up the search?''}
The initial answer was presented in a conference publication \cite{BertagnonGavanelliAAAI20}, where a significant speedup was obtained
in solving Euclidean instances in CLP, by exploiting efficient constraint propagation algorithms tailored for Euclidean TSP instances.
This article significantly extends that work~\citep{BertagnonGavanelliAAAI20},
providing deeper insights, comprehensive analysis, and additional findings.

As a first contribution, we provide a more precise description of the introduced algorithms and we extend the experimental campaign by testing the proposed algorithms on a large dataset of \ac{TSP} instances, including publicly available  and
synthetic instances.

Now, a further research question could be whether such enhanced pruning can also be applied beyond the TSP:
as a second contribution, this 
article shows that the additional pruning based on geometric properties is not confined only to the case of the Euclidean TSP, but can also be applied to other variants of routing problems.
For instance, we show how the proposed techniques can be applied to the \ac{EGTSP}.

The underlying no-crossing and convex-hull ordering properties are
classical geometric results; our contribution lies in translating
them into dedicated \ac{CP}/\ac{CLP} constraints and propagators, and
in extending this approach to the \ac{EGTSP}.

We believe that this work could inspire researchers to extend the devised algorithms also to other routing problems, and also to 
inspire new propagation algorithms to exploit the additional information present in geometric instances deeply.

The rest of the article is organized as follows. In Section~\ref{sec:preliminaries} we introduce the necessary definitions and notations. In Section~\ref{sec:related} we discuss related works. In Section~\ref{sec:nocrossing} we provide a description of our techniques that can exploit the geometric information present in the \ac{ETSP} to improve propagation. In Section~\ref{sec:egtsp_hull} we show how geometric information can also be exploited in other variants of routing problems; we take \ac{EGTSP} as an example. We show the computational results in Section~\ref{sec:experiments} and, finally, in Section~\ref{sec:conclusion} we conclude.
 \section{Preliminaries and notation} 
\label{sec:preliminaries}

A \acf{CSP} is a triple $\CSProblem = \langle \Variables,\Domains,\Constraints \rangle$ where $\Variables$ is a set of $n$ decision variables $\lbrace x_{1}, x_{2}, \dots , x_{n} \rbrace$, $\Domains$ is a set of domains $\lbrace \Dom{x_1}, \Dom{x_2},\dots, \Dom{x_n}\rbrace$ and $\Constraints$ is a set of constraints $\lbrace c_{1}, c_{2}, \dots, c_{m} \rbrace$. Each domain $\Dom{x_i}$ is the set of all possible values that can be assigned to the variable $x_{i}$. Each constraint $c_{i}$ consists of a pair $\langle R_{i}, S_{i} \rangle$ where $R_{i}$ is a relation between the variables $S_{i}$ participating in the constraint.

A solution for a CSP $\CSProblem$ is an {\em instantiation} that assigns a value to each variable in $\Variables$ and satisfies all the constraints $\Constraints$. 
While in a \ac{CSP} all solutions are equally good, in many practical cases an objective function discriminates between solutions.

A \acf{COP} is a pair $\langle \CSProblem,f \rangle$ where $\CSProblem = \langle \Variables,\Domains,\Constraints \rangle$ is a \ac{CSP} and $f$ a function $f : \Dom{x_{1}} \times \dots \times  \Dom{x_{n}} \rightarrow \mathbb{R}$ that associates a value to every solution of \CSProblem. A solution $s$ of $\CSProblem$ is an {\em optimal solution} of the COP $\langle \CSProblem,f \rangle$ 
if and only if there is no solution $s'$ of \CSProblem\ such that $f(s') < f(s)$.

In order to speed up the search of solutions,
constraint programming systems alternate search with constraint propagation; each constraint is implemented by a collection of \emph{propagators}. Each propagator's objective is to observe the domains of the variables involved in the constraint and, as soon as a value is removed from the domain of a variable, 
it checks if some values from other domains have become unsupported and can be removed (\emph{pruned}).

A propagator that is selected for execution is said to be \emph{awakened}, or \emph{activated}. 
The execution of a propagator can have one of three possible outcomes. 
In the first case, the propagator realizes that the constraint has no solution, e.g., because a domain becomes empty as a result of deleting some values. 
In this case a failure is returned and the search algorithm undoes the last choice(s) and attempts different assignments.

In the second case, the propagator finds that the constraint is entailed i.e., it is satisfied whatever the values assumed by the variables among those remaining in the domains. In this case, the propagator can be deallocated.

In the third case, neither of the previous two cases has occurred and so the propagator at the end of the execution is suspended waiting for an event to occur.

The \ac{TSP} is defined on a weighted graph $G=(V,E,w)$, where $V$ is a set of nodes ($\vert V \vert = \Npoints$), $E$ is a set of edges, and $w : E \mapsto \mathbb{R}^+$. 
A {\em path} in $G$ is a sequence $\PathFromTo{v_{\seq_0}}{v_{\seq_k}} = v_{\seq_0} e_{\seq_0,\seq_1} v_{\seq_1} \dots e_{\seq_{k-1},\seq_k} v_{\seq_k}$ such that:

\begin{enumerate}[label=(\roman*)]
\item $v_{\seq_0} , v_{\seq_1} , \dots , v_{\seq_k} \in V$
and are all distinct, and
\item $e_{\seq_0,\seq_1} , e_{\seq_1,\seq_2} , 
\dots , e_{\seq_{k-1},\seq_k} \in E$.
\end{enumerate}

Since a path is uniquely identified by the sequence of its nodes (or of its edges) in the proper order, we will often write paths as sequences of nodes to simplify the notation. 
The length of a path \Path\ is the sum of the weights of its edges: $\Length{\Path} = \sum_{i=0}^{k-1} w(e_{s_i,s_{i+1}})$.
A {\em circuit} \Circuit\ is a sequence obtained by appending $e_{\seq_k,\seq_0}$ to a path $\PathFromTo{v_{\seq_0}}{v_{\seq_k}}$.
A circuit that visits each vertex exactly once is often referred as {\em Hamiltonian circuit}. 

In the Euclidean case, let $\PointSet=\{\point_1,\dots,\point_n\}$ be a set of points,
where $\point_i=(x_i,y_i)$.
The graph associated with $\PointSet$ is $G^\PointSet = (\PointSet,E^\PointSet,w^\PointSet)$,
where $E^\PointSet=\{e_{i,j} \equiv (\point_i,\point_j)\mid \point_i,\point_j \in \PointSet, i\neq j\}$
and $w(\point_i,\point_j)= \dist{\point_i}{\point_j}$, where $\distSym$ is the Euclidean distance.

We use $\segment{\point_i}{\point_j}$ to denote the segment in the plane with the extremes $\point_i$ and $\point_j$. Since in the Euclidean case every edge of a graph corresponds to a segment in the plane between the corresponding endpoints, we will often confuse the edge $e_{i, j}$ with the corresponding segment $\segment{\point_i}{\point_j}$.
Also, we will sometimes confuse the index $i \in V$ with the corresponding point $\point_i$ in the plane.
We denote with $\StraightLine{\point_i}{\point_j}$ the (infinite straight) line passing through points $\point_i$ and $\point_j$,
and with $\Angle{\point_i\point_j\point_k}$ the counter-clockwise angle formed by the segments $\segment{\point_i}{\point_j}$ and $\segment{\point_j}{\point_k}$ with vertex in $\point_j$ from $\point_i$ to $\point_k$. 

A {\em crossing} in a path \Path\ is defined as a common point $\point_q \notin \PointSet$ (i.e., internal to two edges) that is shared by two (or more) edges of \Path\, or a common point $\point_r \in \PointSet$ (i.e. an endpoint of some edge) that is shared by three (or more) edges of $\Path$.

In the \ac{CP} literature, three main representations have been devised for modelling the Hamiltonian circuit problem and the TSP: the {\em permutation} representation, the {\em successor} representation, and the {\em set variable} representation~\cite{vanHoeveEtAl_ImprovedWeightedCircuit}.
The last was also extended to the graph representation~\cite{CP_graph,DBLP:journals/corr/abs-1206-3437,SalesmanAndTree}.

In the {\em permutation} representation, \Npoints\ variables $Pos_{i}$ are introduced \( Pos = (Pos_1,\linebreak[1] Pos_2,\linebreak[1] \dots, Pos_\Npoints) \) each with initial domain $V$; variable $Pos_{i}$ represents the $i$-th node that is visited. For example, if $\Npoints = 5$ and $Pos = (3,5,4,2,1)$ the corresponding tour will be $(3,5,4,2,1,3)$. 
The constraint model for the permutation representation includes an {\tt alldifferent}$(Pos)$ constraint~\cite{alldifferent} on the $Pos$ array of variables, that ensures that each node is visited only once.

The {\em set variable} representation is based on a set variable \cite{puget1992pecos, Gervet1993} 
that represents the edges that form the tour.

In the {\em successor} representation, \Npoints\ variables $\Next_i$ are defined \( \Next = (\Next_1,\linebreak[1] \Next_2,\linebreak[1] \dots,\linebreak[1] \Next_\Npoints) \); variable $\Next_i$ represents the node that follows node $i$ in the circuit, and its initial domain is $\{1,\dots,\Npoints\} \setminus \{i\}$. For example, if $\Npoints = 5$ and $\Next = (3,5,4,2,1)$ the corresponding tour will be $(1,3,4,2,5,1)$. 

The constraint model of the successor representation includes, as in the permutation representation, an ${\tt alldifferent}(\Next)$ constraint on the $\Next$ array of variables, 
that
in this case ensures that each node has exactly one incoming edge (degree constraints).
The constraint model for the successor representation also includes a \circuit$(\Next)$~\cite{GlobalConstraintsCHIP,CaseauLaburthe,KayaHooker} constraint (sometimes called {\tt nocycle}) that avoids subtours, i.e., cycles of length less than \Npoints\ (subtour elimination constraints).

In some cases, the constraint model includes, as redundant representation, also a set of $\Prev$ variables: $\Prev_i$
represents the node that precedes node $i$ in the circuit.
Often, constraint models include redundant representations to obtain additional pruning.

Both the {\tt alldifferent} and the \circuit\ constraints are already implemented in most constraint logic programming systems. 
Hereafter, unless otherwise specified, we refer always to the successor representation.

 \section{Related Work} 
\label{sec:related}

As already introduced, the Traveling Salesperson Problem is probably one of the most studied \ac{COP} in the scientific literature. Many approaches proposed for its resolution have been published over time, even if we limit ourselves only to the  \acl{CP} literature.
Usually, when it comes to exploit the objective function to prune the search space, \ac{CP} formulations are not as effective as Integer Programming models, but they are more flexible in dealing with side constraints. 
To date in the \ac{CP} literature, the most efficient solving methods are those that combine constraint propagation with \ac{ILP} techniques. These hybrid methods yield the solving performance of \ac{CP} models, under certain conditions, comparable to that obtained by \ac{ILP} models.

Various works propose to use relaxations of the \ac{TSP} to prune suboptimal branches; the classical relaxations of the \ac{TSP} are the assignment problem and the one-tree relaxation.
\citet{CaseauLaburthe} propose a simple and effective rule for the \circuit\ constraint (called {\tt nocycle}).
The rule consists in finding a path of mandatory edges of length at most $n - 1$ and removing the edge between the two endpoints of the path. 
They also propose to filter 
values based on the objective function, using the assignment-based and the spanning tree relaxations.

\citet{KayaHooker} propose a new filtering rule based on separator graph, able to remove nonhamiltonian edges, for the \circuit\ constraint. 

\citet{explainingCircuit} consider various propagation algorithm for \circuit\ and provide for each so-called explanations in the context of a lazy clause generation solver.

\citet{DBLP:journals/corr/abs-1206-3437} show how properties of the reduced graphs (obtained by introducing a node for each \ac{SCC} of the original graphs) associated to Asymmetric \acp{TSP} can be used to improve the \ac{MST} relaxation.

\citet{PesantGPR98} address the \ac{TSPTW} and exploit the \circuit\ constraint together with the \ac{MST} relaxation.
\citet{FocacciLM_AMAI02,FocacciLM_Informs02} propose a filtering based on reduced costs for optimization constraints, and in particular use the assignment problem and the minimum spanning forest relaxation.

The groundbreaking work in this area is that by
\citet{vanHoeveEtAl_ImprovedWeightedCircuit}: it
is the first in which a \ac{CP} model was able to solve large \ac{TSP} instances.
In their work, they use a variety of techniques.
They propose an implementation of the weighted circuit constraint that includes the \citet{HeldKarp} scheme,
iterates a Lagrangian relaxation to obtain a high-quality one-tree, and uses it to
remove edges  similarly to reduced costs filtering.
It also identifies as mandatory those edges that, if removed, would increase the current lower bound over the quality of the incumbent solution.
To find quickly a solution, they first run the Lin-Kernighan-Helsgaun
algorithm~\citep{LinKernighan,Helsgaun00}.
In the experiments with asymmetric \acp{TSP}, they also use additive bounding~\citep{AdditiveBoundingTSP} to combine both the 1-tree and the assignment problem relaxations.

In \citet{SalesmanAndTree}, the authors further improve by casting the problem in CP(Graph) and by means of improved search heuristics (i.e., Last Conflict heuristic).

More recently, \citet{Boudreault2021} extend the work of \citet{vanHoeveEtAl_ImprovedWeightedCircuit} by proposing an enhanced CP-based Lagrangian relaxation approach. This method improves the filtering capabilities of the weighted circuit constraint by locally adjusting the Lagrangian multipliers to maximize the number of filtered values. They also develop two novel algorithms, SIMPLE and $\alpha$-SETS, which provide stronger filtering and more efficient pruning of the search space.

\citet{isoart2019integration} design a propagator based on the search of $k$-cutsets. The combination of this constraint with the \ac{WCC} constraint has resulted in a significant reduction in the computation time.

We have already pointed out how from the computational complexity point of view,
both the Metric and the Euclidean \ac{TSP} are \NPH~\citep{TSPeuclideoNPhard}; nonetheless, differently from the general \ac{TSP}, the Euclidean \ac{TSP} admits a \ac{PTAS}~\citep{Arora_PTAS}.
A \ac{PTAS} is an algorithm that, given an instance of an optimization problem and fixed a parameter $\varepsilon > 0$, produces in polynomial time a solution within a factor $1 + \varepsilon$ of the optimum. Note that the computation time can still be an exponential function of ${1}/{\epsilon}$.  
Notwithstanding these theoretically important results, solvers addressing the general \ac{TSP} (e.g., Concorde) are faster in practice. 

Finally, concerning the \ac{GTSP}, most approaches in the literature are based on \ac{ILP} or metaheuristics. To the best of our knowledge, the use of \ac{CP} for the \ac{GTSP} has not previously been explored.

Fischetti et al. studied the face structure of the polytope resulting from an \ac{ILP} formulation of the problem~\citep{DBLP:journals/networks/FischettiGT95} and proposed exact and heuristic separation algorithms for several classes of face-defining inequalities, including comb inequalities~\citep{DBLP:journals/ior/FischettiGT97}. These algorithms were used within a branch-and-cut framework. \citet{LAPORTE1987} addressed the problem using a subtour-elimination constraint relaxation algorithm. They also considered the asymmetric case, which they solved using a modified version of the branch-and-bound algorithm for the asymmetric \ac{TSP} proposed by \citet{CarpanetoToth1980}.

More recent works have focused on approximate methods. \citet{DBLP:conf/hm/PopMS10} proposed a hybrid genetic algorithm for the \ac{GTSP} and compared it with the random-key genetic algorithm of \citet{DBLP:journals/eor/SnyderD06} and the memetic algorithm of \citet{DBLP:journals/nc/GutinK10}. Their results showed competitive solution quality and computation time.

\section{Avoiding crossings}
\label{sec:nocrossing}

Among the geometric information that can be exploited to solve Euclidean TSPs, a well-known result in the literature is that the optimal solution of a metric TSP (thus, also of a Euclidean TSP) in the plane cannot include two edges that cross each other (see Figure~\ref{fig:incrocio}). 
We say that a set of points are all aligned if they lie on the same straight line.

\begin{theorem}[\citet{Flood}]
\label{thm:nocrosses}
Let $\Circuit^*$ be an optimal tour of a metric TSP.
Then, for each $e_{i,j}, e_{k,l} \in \Circuit^*$ such that $\{i,j,k,l\}$ are all different and not all aligned, the segments $\segment{\point_i}{\point_j} \cap \segment{\point_k}{\point_l} = \emptyset$.
\end{theorem}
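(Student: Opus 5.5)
We argue by contradiction with the classical 2-opt exchange. Suppose $\Circuit^*$ is optimal and contains two edges $e_{i,j}$ and $e_{k,l}$ with $\{i,j,k,l\}$ all different and not all aligned, whose segments share a point $\point_q$. Orient the tour so that it traverses $i \to j$, then a path $\PathFromTo{j}{k}$, then $k \to l$, then a path $\PathFromTo{l}{i}$ back to $i$. This labelling costs no generality, since the two edges are disjoint and we may relabel the endpoints of the second edge. Removing the two edges and adding $e_{i,k}$ and $e_{j,l}$, while reversing $\PathFromTo{j}{k}$, gives another Hamiltonian circuit $\Circuit'$. Its length is
\[
\Length{\Circuit'} = \Length{\Circuit^*} - \dist{\point_i}{\point_j} - \dist{\point_k}{\point_l} + \dist{\point_i}{\point_k} + \dist{\point_j}{\point_l}.
\]
The reversed path has the same length because the distance is symmetric. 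It therefore suffices to show $\dist{\point_i}{\point_k} + \dist{\point_j}{\point_l} < \dist{\point_i}{\point_j} + \dist{\point_k}{\point_l}$, which contradicts optimality.

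\textbf{Length inequality.} Since $\point_q$ lies on both segments, we have $\dist{\point_i}{\point_j} = \dist{\point_i}{\point_q} + \dist{\point_q}{\point_j}$ and $\dist{\point_k}{\point_l} = \dist{\point_k}{\point_q} + \dist{\point_q}{\point_l}$. The triangle inequality gives $\dist{\point_i}{\point_k} \le \dist{\point_i}{\point_q} + \dist{\point_q}{\point_k}$ and $\dist{\point_j}{\point_l} \le \dist{\point_j}{\point_q} + \dist{\point_q}{\point_l}$. Adding the two yields the non-strict inequality. Here we use that the points live in the Euclidean plane, which is the setting of the statement; $\point_q$ is a point of the plane and need not be a node.

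\textbf{Strictness (main obstacle).} Equality in the Euclidean triangle inequality $\dist{\point_i}{\point_k} \le \dist{\point_i}{\point_q} + \dist{\point_q}{\point_k}$ holds only if $\point_q$ lies on the segment $\segment{\point_i}{\point_k}$. Similarly, equality in the second inequality holds only if $\point_q$ lies on $\segment{\point_j}{\point_l}$. We show these cannot both happen.

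\begin{itemize}
\item Suppose $\point_q$ is interior to $\segment{\point_i}{\point_j}$ and different from $\point_k, \point_l$. Then $\point_i, \point_q, \point_j$ lie on one line $\ell_1$, and $\point_k, \point_q, \point_l$ lie on a line $\ell_2$.
\item If $\ell_1 \neq \ell_2$, the two lines meet only at $\point_q$. Equality in the first inequality would force $\point_i, \point_q, \point_k$ to be collinear, so $\point_k$ would lie on $\ell_1 \cap \ell_2 = \{\point_q\}$. This is impossible, since $\point_q \neq \point_k$.
\item If $\ell_1 = \ell_2$, all four points are aligned, which the hypothesis excludes.
\item The remaining degenerate placements are where $\point_q$ coincides with an endpoint, say $\point_q = \point_k$, so $\point_k$ lies on $\segment{\point_i}{\point_j}$. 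Then the first inequality is trivially tight. The second is tight only if $\point_k$ lies on $\segment{\point_j}{\point_l}$, which would put $\point_j, \point_k, \point_l$ on $\ell_1$ and again make all four points aligned.
\end{itemize}

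So the inequality is strict, $\Length{\Circuit'} < \Length{\Circuit^*}$, and the optimality of $\Circuit^*$ is contradicted.

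\textbf{Scope.} Strictness genuinely relies on the Euclidean metric. For a general metric the theorem should be read with points in the plane and Euclidean weights, which is the interpretation used throughout the paper.
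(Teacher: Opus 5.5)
Your proof is correct and follows the same route as the paper: the 2-opt exchange replacing $e_{i,j}, e_{k,l}$ by $e_{i,k}, e_{j,l}$ (reversing the middle path), and the two triangle inequalities through the crossing point $Q$. Your strictness argument is more careful than the paper's, which only asserts that the triangles $(P_i,P_k,Q)$ and $(P_j,P_l,Q)$ are non-degenerate; that assertion fails literally when $Q$ coincides with an endpoint such as $P_k$, and there, as you show, strictness comes from the other triangle alone. Your remark that strictness relies on the Euclidean metric is also apt.
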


\begin{proof}
By contradiction, suppose that the optimal tour $\Circuit^*$ contains two segments  $e_{i,j}, e_{k,l}$ such that $\{i,j,k,l\}$ are all different and $\segment{\point_i}{\point_j} \cap \segment{\point_k}{\point_l} = \{Q\}$.

Without loss of generality, let $\Circuit^* = \point_i e_{i,j} \point_j \PathFromTo{j}{k} \point_k e_{k,l} \point_l \PathFromTo{l}{i} \point_i$. Consider now the tour $\Circuit^\dag = \point_i e_{i,k} \point_k \PathFromTo{k}{j} \point_j e_{j,l} \point_l \PathFromTo{l}{i} \point_i$,
where $\PathFromTo{k}{j}$ is the path $\PathFromTo{j}{k}$ reversed.

The difference $\Length{\Circuit^*}-\Length{\Circuit^\dag} = \weight{e_{i,j}} + \weight{e_{k,l}} - \weight{e_{i,k}} - \weight{e_{j,l}} 
= (\dist{\point_i}{Q} + \dist{Q}{\point_j}) + (\dist{\point_k}{Q} + \dist{Q}{\point_l}) - \dist{\point_i}{\point_k} - \dist{\point_j}{\point_l}
\geq 0$ applying the triangle inequality to the triangles $(\point_i,\point_k,Q)$ and $(\point_j,\point_l,Q)$.

Considering that the points $(\point_i,\point_k,Q)$ and $(\point_j,\point_l,Q)$ are not aligned, the inequality becomes strict, so 
$\Length{\Circuit^\dag} < \Length{\Circuit^*}$, that contradicts the fact that $\Circuit^*$ is optimal.
\end{proof}

\begin{figure}[t]
\centering
\includegraphics[width=.6\textwidth]{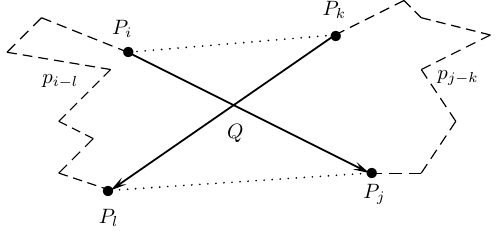}

 \caption{\label{fig:incrocio} A self-crossing circuit.  Theorem~\ref{thm:nocrosses} suggests that instead of taking \segment{\point_i}{\point_j} and \segment{\point_k}{\point_l}, a shorter tour chooses the dotted edges \segment{\point_i}{\point_k} and \segment{\point_j}{\point_l}}
\end{figure}

\subsection{Introducing nocrossing constraint}

From Theorem~\ref{thm:nocrosses} follows that during the search for an optimal TSP it is possible to avoid those Hamiltonian paths that include crossing edges; this can be seen as a \emph{dominance rule}~\citep{JOUGLET2011433}. Based on this classical result, we propose the \nocrossing\ constraint, which imposes that two segments in the TSP should not cross each other. In the successor representation, it is defined as follows:

\begin{equation}
\begin{gathered}
\nocrossing_{i,j}(\Next_i,\Next_j) = \\
= \left\lbrace \left(n_i, n_j\right) \in \Dom{\Next_i} \times \Dom{\Next_j} \vert \left(\segment{\point_i}{\point_{n_i}} \cap \segment{\point_j}{\point_{n_j}} \right)
\subset \{\point_i,\point_j\} \right\rbrace
\label{eq:nocrossing_definition}
\end{gathered}
\end{equation}

Since the successor representation includes the ${\tt alldifferent}(\Next)$ constraint, the variables $Next_i$ and $Next_j$ cannot assume the same value so the only type of intersection allowed by the \nocrossing\ constraint, presented in Equation~\ref{eq:nocrossing_definition}, is on one of the extremes i.e. $\point_{Next_i} \equiv \point_j$ or $\point_{Next_j} \equiv \point_i$.

The \nocrossing\ constraint is a \emph{binary} constraint, i.e., it involves exactly two decision variables, 
and it is also a \emph{redundant} constraint as its inclusion in the model does not change the set of solutions.

For a TSP instance with $n$ nodes, we introduce $\frac{n(n-1)}{2}$
constraints, one for each pair of nodes in the graph. The \nocrossing\ constraint can be implemented through a pair of propagators: 
one propagator removes values from the domain of $\Next_i$ based on the values in the domain of $\Next_j$, while the other propagator propagates changes in the other direction.

The \nocrossing\ constraint propagator could be implemented naively, for example using the table constraint~\citep{tableConstraintNengFa,DBLP:conf/ijcai/BessiereR97, DBLP:conf/aaai/GentJMN07, DBLP:conf/cp/Lecoutre08, DBLP:conf/cp/KatsirelosW07} or the {\tt propia} library~\citep{propia}. A table constraint consists of a table (usually a list of tuples) of values that the involved variables must, or must not, assume. However, these implementations would be inefficient, because the constraint wakes up most of the time without being able to propagate. In addition, with the table constraint one should initially compute large tables, containing, for all pairs of edges in the graph, if they cross or not.

A naive propagator also tends to wake up every time a value is removed from the domain of $\Next_i$ in order to remove inconsistent values from the domain of $\Next_j$. Propagating such constraint, with a naive propagator, would have the usual cost of arc consistency~\citep{AC3} propagation for a single constraint of $O(d^2)$ (if $d$ is the size of the domains) in each activation of the constraint.

From the definition of arc consistency and Equation~\ref{eq:nocrossing_definition}, a value $v \in \Dom{\Next_j}$ can be removed from $\Dom{\Next_j}$ only if \segment{\point_j}{\point_v} intersects all
possible segments originating from $\point_i$.
A necessary condition for this is that all segments originating from $\point_i$ lie on the same half-plane with respect to the line \StraightLine{\point_i}{\point_j}.

\begin{theorem}
\label{thm:same_side}
Let $o, u \in \Dom{\Next_i}$ such that $\point_o$ and $\point_u$ do not lie on the line $l$ passing through
$\point_i$ and $\point_j$ and
are in different half-planes with respect to the line $l$.
Then, any value $k\in \Dom{\Next_j}$  such that $\point_k \not\in l$
is arc consistent with respect to the constraint $\nocrossing_{i,j}(\Next_i,k)$.
\end{theorem}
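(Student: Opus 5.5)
Fix $k \in \Dom{\Next_j}$ with $\point_k \notin l$. By the definition of arc consistency and Equation~\ref{eq:nocrossing_definition}, it suffices to find a support $n_i \in \Dom{\Next_i}$ such that $\segment{\point_i}{\point_{n_i}} \cap \segment{\point_j}{\point_k} \subset \{\point_i,\point_j\}$. Since $\point_o$ and $\point_u$ lie in opposite open half-planes of $l$, one of them, say $\point_o$, lies in the open half-plane not containing $\point_k$ (recall $\point_k\notin l$, so $\point_k$ lies strictly in one of them). I will show that $n_i = o$ is a support.

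The key step is locating the two segments relative to $l$. Apart from its endpoint $\point_i \in l$, the segment $\segment{\point_i}{\point_o}$ lies in the open half-plane $H_o$ containing $\point_o$. This holds because a segment joining a point on $l$ to a point strictly on one side meets $l$ only at that endpoint. Similarly, $\segment{\point_j}{\point_k}$ lies in the opposite open half-plane $H_k$, except for $\point_j$. Since $H_o \cap H_k = \emptyset$, any common point of the two segments must lie on $l$. The only point of $\segment{\point_i}{\point_o}$ on $l$ is $\point_i$, and the only point of $\segment{\point_j}{\point_k}$ on $l$ is $\point_j$. Hence the intersection is contained in $\{\point_i\}\cap\{\point_j\}$, which is empty since $i\neq j$, and in particular it is contained in $\{\point_i,\point_j\}$. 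So $(o,k)$ satisfies $\nocrossing_{i,j}$ and $k$ is supported.

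I expect no real obstacle. The only care needed is degenerate cases. If $o = j$ or $k = i$, the corresponding point would lie on $l$, contradicting the hypotheses, so these cases cannot occur. The definition of $l$ requires $\point_i\neq\point_j$, which I will assume, as the paper implicitly does for distinct nodes. Finally, the statement concerns only the pair $(\Next_i,\Next_j)$, so possible $o=k$ interactions with \alldifferent\ are irrelevant here. If one also wants $o \neq k$, this is automatic: $\point_o$ and $\point_k$ lie on opposite sides of $l$, so they are distinct points.
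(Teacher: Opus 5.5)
Your proof is correct and rests on the same half-plane separation as the paper's. The paper argues by contradiction that $\segment{\point_j}{\point_k}$ would have to meet both $\segment{\point_i}{\point_o}$ and $\segment{\point_i}{\point_u}$, and so would have points on both sides of $l$; you use the same observation directly, exhibiting the value on the side of $l$ opposite to $\point_k$ as an explicit support, which shows the two segments are disjoint and handles the endpoints $\point_i,\point_j$ more carefully than the paper's informal ``$\point_k$ lies in both half-planes.''
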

\begin{proof}
By contradiction, suppose there exists $k\in \Dom{\Next_j}$ that is inconsistent with the constraint $\nocrossing_{i,j}(\Next_i,k)$.  
Since it is inconsistent, the segment $\segment{\point_j}{\point_k}$ must cross all the segments $\segment{\point_i}{\point_z}$ such that $z\in \Dom{\Next_i}$, so in particular it crosses both $\segment{\point_i}{\point_o}$ and $\segment{\point_i}{\point_u}$.
But the intersection $I_o\equiv \segment{\point_i}{\point_o} \cap \segment{\point_j}{\point_k}$ lies on a different half-plane from the one that hosts $I_u\equiv \segment{\point_i}{\point_u} \cap \segment{\point_j}{\point_k}$,
so $\point_k$ lies in both half-planes, meaning that $\point_k \in l$: absurd.
\end{proof}

Theorem~\ref{thm:same_side} does not cover the case in which one of the points $\point_o$, $\point_u$ lies on the line $\StraightLine{\point_i}{\point_j}$. The following proposition deals with the cases where three points are aligned.

\begin{proposition}
\label{thm:three_points_aligned}
Given a graph $G$ whose nodes do not lie all on the same line;
let $a$, $b$ and $c$ be three nodes of  $G$ such that $\point_c \in \segment{\point_a}{\point_b}$.
Then, segment $\segment{\point_a}{\point_b}$ is not in the optimal TSP. 
\end{proposition}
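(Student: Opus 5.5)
By contradiction, suppose an optimal tour $\Circuit^*$ contains the edge $e_{a,b}$. Let $c$ be a node with $\point_c$ in the interior of $\segment{\point_a}{\point_b}$; if $\point_c$ coincides with an endpoint the statement is degenerate, so we assume distinct points. I will build a tour strictly shorter than $\Circuit^*$. Node $c$ is visited somewhere else in $\Circuit^*$, between a predecessor $p$ and a successor $q$. Thus $\Circuit^*$ contains the edges $e_{p,c}$, $e_{c,q}$ and $e_{a,b}$.

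The natural move is to splice $c$ out of its current place and insert it between $a$ and $b$. This gives the tour $\Circuit^\dag$, in which $e_{p,c},e_{c,q},e_{a,b}$ are replaced by $e_{p,q},e_{a,c},e_{c,b}$. Because $\point_c\in\segment{\point_a}{\point_b}$, we have $\dist{\point_a}{\point_c}+\dist{\point_c}{\point_b}=\dist{\point_a}{\point_b}$. So
\[
\Length{\Circuit^*}-\Length{\Circuit^\dag}=\dist{\point_p}{\point_c}+\dist{\point_c}{\point_q}-\dist{\point_p}{\point_q}\ge 0
\]
by the triangle inequality. This is strict unless $\point_c$ lies on $\segment{\point_p}{\point_q}$. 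Two special cases need care. If $p$ or $q$ is $a$ or $b$, the splice must be written out carefully, but the same accounting works. If $n=3$, the three points are aligned, which the hypothesis excludes.

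The main obstacle is the equality case, where $\point_c\in\segment{\point_p}{\point_q}$. The configuration then repeats: $c$ again sits inside a tour edge. My plan is a maximality argument. Among all optimal tours, take one with the fewest edges that contain a node in their interior. Equivalently, use an extremal choice such as the number of ``straight'' angles at visited nodes.

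In the equality case the modified tour is also optimal. Its new edges $e_{a,c}$ and $e_{c,b}$ are sub-segments of $\segment{\point_a}{\point_b}$, and $e_{p,q}$ contains $c$. I need a potential that strictly decreases. The total count of (edge, interior node) incidences should do: $\segment{\point_a}{\point_b}$ had $c$ as an interior node, and its pieces split those interior nodes without adding new ones. The edge $\segment{\point_p}{\point_q}$ merely takes over the incidences of $\segment{\point_p}{\point_c}$ and $\segment{\point_c}{\point_q}$, plus $c$ itself. So the count is not monotone, and this is the delicate point.

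I would instead iterate the argument, using that not all nodes are collinear. Follow the chain of tour nodes collinear with $\point_a\point_b$ along the line $\StraightLine{\point_a}{\point_b}$. Since some node lies off this line, the tour must leave the line at some node $r$. At that point the triangle $(\point_p,\point_c,\point_q)$ built at $r$ is non-degenerate, so the inequality above is strict and contradicts optimality.
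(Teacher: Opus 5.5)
Your splice step is the first step of the paper's proof and is correct, including the cases $p$ or $q\in\{a,b\}$. The gap is the closing paragraph, which does not produce a contradiction.

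The simplest equality configuration already defeats it. Take $\point_p,\point_q$ off the line $\StraightLine{\point_a}{\point_b}$ with $\point_c$ in the interior of $\segment{\point_p}{\point_q}$, so the tour passes straight through $\point_c$ across $\segment{\point_a}{\point_b}$. Your chain is then just $\{c\}$ and the tour leaves the line at $r=c$. The triangle $(\point_p,\point_c,\point_q)$ is degenerate by the very assumption of the equality case, so the splice gain is $0$. More generally, the splice accounting at a node $r$ needs $r$ to lie on the edge it is inserted into (you used $\dist{\point_a}{\point_c}+\dist{\point_c}{\point_b}=\dist{\point_a}{\point_b}$). The node where the tour leaves the line need not lie on any tour edge. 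The potential you abandoned is, as you note, not monotone either.

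The missing idea is the exchange the paper uses after the splice. Orient $\Circuit^*$ so that it traverses $e_{a,b}$ from $a$ to $b$, and let $p$ be the predecessor of $c$. Replace $e_{a,b},e_{p,c}$ by $e_{a,p},e_{b,c}$, reversing the subpath from $b$ to $p$. Since $\dist{\point_a}{\point_b}=\dist{\point_a}{\point_c}+\dist{\point_c}{\point_b}$, the gain is $\dist{\point_a}{\point_c}+\dist{\point_c}{\point_p}-\dist{\point_a}{\point_p}$. This is strictly positive unless $\point_c\in\segment{\point_a}{\point_p}$. Symmetrically, replacing $e_{a,b},e_{c,q}$ by $e_{a,c},e_{b,q}$ gains $\dist{\point_c}{\point_b}+\dist{\point_c}{\point_q}-\dist{\point_b}{\point_q}$. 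Together these settle every configuration in which a tour-neighbour of $c$ lies off $\StraightLine{\point_a}{\point_b}$, including the one above.

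What remains is the fully collinear case: $\point_p$ on the line on $b$'s side of $\point_c$ and $\point_q$ on $a$'s side. There the splice and both exchanges are length-neutral. The paper's ``positive by the triangle inequality'' is also only $\geq 0$ in this case, so this is where non-collinearity must be used globally, and where a correct version of your chain idea belongs. One way to do it:
\begin{enumerate}
\item Every maximal block of consecutive tour nodes on the line is entered and left from nodes off it; such nodes exist by hypothesis.
\item Reordering a block monotonically along the line, in the suitable direction, is strictly shorter unless the block is already monotone. This is because the off-line entry and exit make the triangle inequalities strict.
\item Hence in an optimal tour $c$ lies in a different block from $a,b$.
\item Comparing the two blocks' extents, some block endpoint lies strictly inside an edge of the other block. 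That endpoint has a tour-neighbour off the line, so the exchange above is strict there.
\end{enumerate}
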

\begin{proof}
The proof is similar to that of Theorem~\ref{thm:nocrosses} as this proposition is no more than a special case.
Let $\point_c$ be the point that lies on segment $\segment{\point_a}{\point_b}$, by contradiction suppose that segment $\segment{\point_a}{\point_b}$ appears in the optimal TSP.

The optimal TSP $\Circuit^*$ contains two points: one immediately preceding $\point_c$, which we will denote by $\point_{c^{'}}$, and one immediately following, denoted by $\point_{c^{''}}$.
Let $\Circuit^* = \point_a e_{a,b} \point_b \PathFromTo{b}{c^{'}} \point_{c^{'}} e_{c^{'},c} \point_c e_{c,c^{''}} \point_{c^{''}} \PathFromTo{c^{''}}{a}$. 

Now, point $\point_c$ must be on segment $\segment{\point_{c^{'}}}{\point_{c^{''}}}$, if the points were not aligned it would be better to take segments $\segment{\point_{c^{'}}}{\point_{c^{''}}}$, $\segment{\point_a}{\point_c}$ and $\segment{\point_c}{\point_b}$.  

Consider now the tour $\Circuit^\dag = \point_a e_{a,c^{'}} \point_{c^{'}} \PathFromTo{c^{'}}{b} \point_b e_{b,c} \point_c e_{c, c^{''}} \point_{c^{''}} \PathFromTo{c^{''}}{a}$. 
The difference $\Length{\Circuit^*} - \Length{\Circuit^\dag} = \weight{e_{a,b}} + \weight{e_{c^{'},c}} - \weight{e_{a,c^{'}}} - \weight{e_{b,c}}$.
At the beginning, we assumed that $\point_c$ is a point on segment $\segment{\point_a}{\point_b}$, so we can rewrite $\weight{e_{a,b}}$ as $\weight{e_{a,c}} + \weight{e_{c,b}}$ obtaining $\Length{\Circuit^*} - \Length{\Circuit^\dag} = \weight{e_{a,c}} + \weight{e_{c^{'},c}} - \weight{e_{a,c^{'}}} = \dist{\point_a}{\point_c} + \dist{\point_{c^{'}}}{\point_c} - \dist{\point_{a}}{\point_{c^{'}}}$ which is positive by the triangle inequality, contradicting the fact that segment $\segment{\point_a}{\point_b}$ can be part of $\Circuit^*$.

\end{proof}

Given Proposition~\ref{thm:three_points_aligned}, it is possible to remove from the domain of each variable $\Next_a$ all the values $b$ such that the segment $\segment{\point_a}{\point_b}$ contains another node of the graph $G$.  For the remainder of this article, we will assume that this pre-processing step has been performed before the search begins; this assumption simplifies the following discussion.

Algorithm~\ref{alg:nocrossing_propagator} sketches the algorithm of a propagator for the \nocrossing\ constraint; it is awakened when the domain of variable $\Next_i$ is reduced, and it performs propagation to possibly reduce the domain of $\Next_j$; to fully implement the constraint, another symmetric propagator would be imposed in the reverse direction (from $\Next_j$ to $\Next_i$).

\begin{algorithm}[tbp]
\small
\caption{\nocrossing\ propagator \label{alg:nocrossing_propagator}}
\begin{algorithmic}[1]

\Function{nocrossing\_propagator}{$i,\Next_i,j,\Next_j$}\label{alg:nocrossing:line_phase_one}
\State	$Under \gets$ select one element in \Dom{\Next_i} s.t. $\point_{Under}$  is under the line $\StraightLine{\point_i}{\point_j}$
\If {there is no such element}
\State \Call {nocrossing\_propagator\_phase\_2}{$i,\Next_i,j,\Next_j$}
\Else\ $Over \gets$ select one element in $\Dom{\Next_i}$ s.t. $\point_{Over}$ is over the line \StraightLine{\point_i}{\point_j}
	\If {there is no such element}
	\State \Call {nocrossing\_propagator\_phase\_2}{$i,\Next_i,j,\Next_j$}
	\Else\ suspend waiting for either $Over$ or $Under$ to be removed from $\Dom{\Next_i}$
	\EndIf
\EndIf
\EndFunction 
\label{alg:nocrossing:line_end_phase_one}

\Function{nocrossing\_propagator\_phase\_2}{$i,\Next_i,j,\Next_j$} \label{alg:nocrossing:line_phase_two}
\State $\minAlpha \gets \min \{\alpha_x \mid x \in \Dom{\Next_i}\}$ \label{alg:nocrossing:line_compute_minAlpha}
\State Let $x^i_{\minAlpha}$ be the value in $\Dom{\Next_i}$ corresponding to \minAlpha
\State $\maxBeta \gets \max \{\beta_x \mid x \in \Dom{\Next_i}\}$ \label{alg:nocrossing:line_compute_maxBeta}
\State Let $x^i_{\maxBeta}$ be the value in $\Dom{\Next_i}$ corresponding to \maxBeta
\ForAll {$y^j \in \Dom{\Next_j} \mbox{ s.t. } \alpha_{y^j} < \minAlpha$}  \label{alg:nocrossing:line_forall_less_minAlpha}
\If {$\beta_{y^j}>\maxBeta$} \label{alg:nocrossing:line_check_beta}
	\State remove $y^j$ from $\Dom{\Next_j}$
	\EndIf
\EndFor \label{alg:nocrossing:line_end_forall_less_minAlpha}
\If {$|\Dom{\Next_i}| > 1 \ \land\ |\Dom{\Next_j}| > 1$}
	\State suspend waiting for either $x^i_{\minAlpha}$ or $x^i_{\maxBeta}$  to be removed from $\Dom{\Next_i}$
\EndIf

\EndFunction
\end{algorithmic}
\end{algorithm}

As long as the value $j$ is contained in the domain of the variable $\Next_i$, according to Theorem~\ref{thm:nocrosses}, no propagation can take place because the point $\point_j$ lies on the same straight line as the point $\point_i$; for this reason, the propagator described in Algorithm~\ref{alg:nocrossing_propagator} is imposed only when the value $j$ is removed from $\Dom{\Next_i}$ .

In the first phase, lines \ref{alg:nocrossing:line_phase_one}-\ref{alg:nocrossing:line_end_phase_one} of Algorithm~\ref{alg:nocrossing_propagator}, the propagator is suspended 
and waits for all elements in the domain of $\Next_j$ to lie in the same half-plane with respect to \StraightLine{\point_i}{\point_j}.
To do so, one element $Over \in \Dom{\Next_j}$ and one $Under \in \Dom{\Next_j}$
that lie, respectively, over and under the line $\StraightLine{\point_i}{\point_j}$ are selected.
If one of them does not exist, all possible segments originating from $\point_{j}$ lie on the same half-plane and the control passes to the next phase; otherwise, the propagator suspends waiting for one of the two $Over$ and $Under$ elements to be removed from $\Dom{\Next_j}$.
This strategy mimics, in a sense, the idea of watched literals originally proposed in SAT solvers~\citep{CHAFF} and currently used  in several solvers,
and significantly reduces the number of activations of the propagator.

Checking if a point $P_a$ lies in the half-plane under or over the line \StraightLine{\point_b}{\point_c} amounts to
check the sign of the (projection on the $z$ axis) of the cross product $(\point_a-\point_b) \times (\point_c-\point_b)$.

When all segments in the domain of the variable $\Next_i$ lie on the same half-plane with respect to the line $\StraightLine{\point_i}{\point_j}$, we should, for each element $x \in \Dom{Next_j}$, check if the segment \segment{\point_j}{\point_x} crosses 
all segments exiting from the point $\point_i$. Naively performing the crossing check would require a number $O(n^2)$ of operations  each time the propagator wakes up (typically an exponential number of times while exploring the search space).

In order to reduce the number of crossing checks, it is possible to sort the elements $\point_x \in \Dom{Next_i}$ from smallest (denoted by the symbol $\minAlpha$) to largest according to the $\Angle{\point_i\point_j\point_x}$ angle.
Figure~\ref{fig:figura_fase2} shows that a segment $\segment{\point_j}{\point_t}$ ($\point_t \in \Dom{\Next_j}$) could intersect all segments originating from $\point_i$ only if the angle $\Angle{\point_i\point_j\point_t}$ is smaller than $\minAlpha$. In addition, if such a segment does not exist then there is no value to be removed from the $\Dom{\Next_j}$ and we can suspend the propagator waiting for the value corresponding to $\minAlpha$ to be removed from $\Dom{\Next_i}$.
This turns out to be a necessary condition for the \nocrossing\ propagator to remove values from the domain of $\Next_j$
and is further detailed in Theorem~\ref{thm:necessary_phase2}.

\begin{theorem}
\label{thm:necessary_phase2}
For each $k \in \Dom{\Next_i} \cup \Dom{\Next_j}$, let $\alpha_k=\Angle{\point_i\point_j\point_k}$. 
If all the elements $q \in \Dom{\Next_i}$ lie on the same half-plane with respect to the line $\StraightLine{\point_i}{\point_j}$,
then 
a necessary condition for a segment $\segment{\point_j}{\point_t}$ originating from $\point_j$ and reaching an element $t \in \Dom{\Next_j}$
to cross all segments $\segment{\point_i}{\point_q}$ originating from $\point_i$ is that 
$\alpha_t \leq \minAlpha$,
where 
$\minAlpha = \min \{ \alpha_q \mid q \in \Dom{\Next_i}
\}$.
\end{theorem}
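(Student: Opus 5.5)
The argument is a short geometric contrapositive. Let $q^* \in \Dom{\Next_i}$ attain $\minAlpha$, so $\alpha_{q^*}=\minAlpha$. I will show that if $\alpha_t > \minAlpha$, then $\segment{\point_j}{\point_t}$ does not cross $\segment{\point_i}{\point_{q^*}}$. That suffices, because the segment then fails to cross \emph{all} segments leaving $\point_i$.

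First I fix the setting. By hypothesis every $\point_q$ with $q\in\Dom{\Next_i}$ lies in one open half-plane $H$ bounded by $l=\StraightLine{\point_i}{\point_j}$. Here $j\notin\Dom{\Next_i}$, since the propagator is imposed only after $j$ is removed, and no $\point_q$ lies on $l$, by the preprocessing of Proposition~\ref{thm:three_points_aligned} together with the half-plane assumption. I orient the counter-clockwise angle $\alpha_k=\Angle{\point_i\point_j\point_k}$ so that points of $H$ have $\alpha_k\in(0,\pi)$; if $H$ is the other side, I swap the orientation, which does not change the argument.

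Next I dispose of $\point_t$ not in $H$. If $\point_t$ lies in the open opposite half-plane, then $\segment{\point_j}{\point_t}\setminus\{\point_j\}$ avoids the closed half-plane $\bar H$. Every $\segment{\point_i}{\point_q}$ lies in $\bar H$, and it meets $l$ only at $\point_i\neq\point_j$, so there is no crossing at all. If $\point_t\in l$, the segment lies on $l$ and meets $\segment{\point_i}{\point_q}$ at most in $\point_i$, which is an allowed extreme. In both cases no crossing occurs, so any crossing segment must have $\point_t\in H$, hence $\alpha_t\in(0,\pi)$. With this convention, ``$\alpha_t>\minAlpha$'' includes the angles in $(\pi,2\pi)$, which are exactly the points on the wrong side.

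The main step is the case $\point_t\in H$ with $\minAlpha<\alpha_t<\pi$. The ray from $\point_j$ through $\point_{q^*}$ splits $H$ into two angular sectors at $\point_j$. The first is the sector $S_1=\{\alpha\in(0,\minAlpha)\}$, which borders the ray $\point_j\point_i$. The second is $S_2=\{\alpha\in(\minAlpha,\pi)\}$. The segment $\segment{\point_i}{\point_{q^*}}$ runs from $\point_i$, on the boundary ray of angle $0$, to $\point_{q^*}$, on the ray of angle $\minAlpha$. By convexity of the cone, every point of it has angle in $[0,\minAlpha]$ as seen from $\point_j$. Every point of $\segment{\point_j}{\point_t}$ other than $\point_j$ has angle exactly $\alpha_t>\minAlpha$, and $\point_j$ itself is not on $\segment{\point_i}{\point_{q^*}}$ because $\point_{q^*}\notin l$. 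The two segments are therefore disjoint, contradicting the assumption that $\segment{\point_j}{\point_t}$ crosses every $\segment{\point_i}{\point_q}$. Hence $\alpha_t\le\minAlpha$.

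I expect the only real obstacle to be the angle bookkeeping: the orientation convention for the counter-clockwise angle and the degenerate collinear configurations, where Proposition~\ref{thm:three_points_aligned} and the removal of $j$ are needed. The equality case $\alpha_t=\minAlpha$ is harmless, since the inequality is non-strict. I will handle the orientation by an explicit without-loss-of-generality reflection.
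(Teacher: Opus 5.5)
Your proof is correct and follows essentially the same argument as the paper: take a minimizer $z$ of $\alpha$, observe from $\point_j$ that $\segment{\point_i}{\point_z}$ subtends angles in $[0,\minAlpha]$ while $\segment{\point_j}{\point_t}$ lies on the single ray of angle $\alpha_t>\minAlpha$, so the two segments are disjoint. Your separate handling of $\point_t$ in the opposite half-plane or on $\StraightLine{\point_i}{\point_j}$, your convex-cone justification of the angular range, and your explicit orientation convention only spell out what the paper's polar-coordinate argument (with its without-loss-of-generality choice of axes) leaves implicit.
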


\begin{proof}
Consider a coordinate system centered into $\point_j$, with the abscissa pointing toward $\point_i$ and such that all the points in $\Dom{\Next_i}$ have non-negative ordinate (see Figure~\ref{fig:figura_fase2}).

By contradiction, suppose that 
$\alpha_t > \minAlpha$; 
we prove that there is a segment originating from $\point_i$ that does not intersect with $\segment{\point_j}{\point_t}$.
Let $z \in \Dom{\Next_i}$ such that $\minAlpha = \Angle{\point_i\point_j\point_z}$.

In polar coordinates, the segment \segment{\point_i}{\point_z} is seen from $\point_j$ with angles between 0 and $\minAlpha$.
All the points on the segment \segment{\point_j}{\point_t} are seen under the angle $\alpha_t$.
Since $\alpha_t > \minAlpha$, there is no intersection between $\segment{\point_i}{\point_z}$ and $\segment{\point_j}{\point_t}$.
\end{proof}

\begin{figure}[tbp]
\centering
\includegraphics[width=\textwidth]{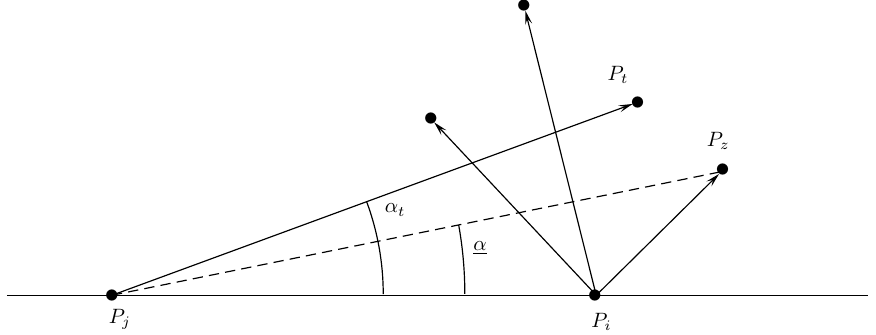}
\caption{\label{fig:figura_fase2} An arrow from $\point_x$ to $\point_y$ means that $y \in \Dom{\Next_x}$. Dashed lines are plotted to show the angles.}
\end{figure}

The condition of Theorem~\ref{thm:necessary_phase2} is only necessary,
a counterexample is shown in Figure~\ref{fig:figura_fase2counterexample},
where segment $\segment{\point_j}{\point_t}$ does not cross all segments that leave the node $\point_i$. 

\begin{figure}[tbp]
\centering
\includegraphics[width=\textwidth]{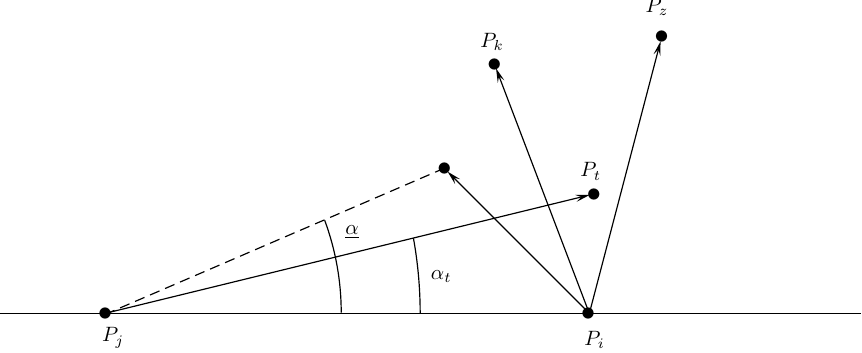}
\caption{\label{fig:figura_fase2counterexample} The condition in Thm~\ref{thm:necessary_phase2} is not sufficient: $\alpha_t < \minAlpha$ but $\segment{\point_j}{\point_t}$ does not cross all segments exiting from $\point_i$.}
\end{figure}

The following theorem provides a sufficient condition for a segment to cross all segments originating from $\point_i$.

\begin{theorem}
\label{thm:sufficient_phase2}
For each $k \in \Dom{\Next_i} \cup \Dom{\Next_j}$, let $\beta_k = \Angle{\point_k\point_i\point_j}$.
Assume all the elements $q \in \Dom{\Next_i}$ lie on the same half-plane with respect to the line $\StraightLine{\point_i}{\point_j}$.
Suppose there exists $t\in \Dom{\Next_j}$ such that $\alpha_t < \minAlpha$.

Then a sufficient condition for segment $\segment{\point_j}{\point_t}$ to cross all segments $\segment{\point_i}{\point_q}$
such that $q \in \Dom{\Next_i}$ is that $\beta_t > \maxBeta$, where $\maxBeta = \max\{\beta_q \mid q \in \Dom{\Next_i} \}$.
\end{theorem}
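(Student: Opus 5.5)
The plan is to work in the coordinate system used in the proof of Theorem~\ref{thm:necessary_phase2}. Put the origin at $\point_j$, the positive abscissa pointing toward $\point_i$, and orient the plane so that every $\point_q$ with $q\in\Dom{\Next_i}$ has positive ordinate. Such points cannot have zero ordinate: $j$ has been removed from $\Dom{\Next_i}$, and the pre-processing of Proposition~\ref{thm:three_points_aligned} excludes segments containing other nodes. In this frame every $\alpha_q$ lies in $(0,\pi)$, so $\alpha_t<\minAlpha<\pi$. The hypothesis $\beta_t>\maxBeta$ together with the angle convention should then force $\point_t$ into the same open upper half-plane. I would check this first, because it is where the counter-clockwise conventions for $\alpha$ and $\beta$ must be matched carefully. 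The degenerate case $\alpha_t=0$ I would dismiss separately using the pre-processing step.

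Next, fix an arbitrary $q\in\Dom{\Next_i}$ and let $r$ be the ray from $\point_j$ with polar angle $\alpha_t$; the segment $\segment{\point_j}{\point_t}$ is an initial piece of $r$. Seen from $\point_j$, the point $\point_i$ has polar angle $0$ and $\point_q$ has polar angle $\alpha_q\ge\minAlpha>\alpha_t$, with all angles in $[0,\pi)$. Since polar angle from $\point_j$ varies continuously along $\segment{\point_i}{\point_q}$, and this segment stays in the closed upper half-plane away from $\point_j$, it meets $r$ at exactly one point $X\neq\point_j$. This is the same polar-angle argument as in Theorem~\ref{thm:necessary_phase2}, used in the converse direction.

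It remains to show that $X$ lies on $\segment{\point_j}{\point_t}$ and not beyond $\point_t$; this is where $\beta$ enters. As a point $Y$ moves outward along $r$ from $\point_j$ to infinity, the angle $\Angle{Y\point_i\point_j}$ increases strictly from $0$ toward $\pi-\alpha_t$. I would justify this by the triangle $\point_i\point_j Y$, whose angle at $\point_j$ is fixed at $\alpha_t$: the angle at $\point_i$ is a monotone function of $|\point_jY|$, for instance by the law of sines or by noting that the ray from $\point_i$ sweeps monotonically. Since $X$ lies on $\segment{\point_i}{\point_q}$, the angle at $\point_i$ toward $X$ equals $\beta_q\le\maxBeta<\beta_t$. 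Monotonicity then gives $|\point_jX|<|\point_j\point_t|$, so $X\in\segment{\point_j}{\point_t}$, and the two segments cross. As $q$ was arbitrary, $\segment{\point_j}{\point_t}$ crosses every segment leaving $\point_i$.

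The main obstacle is bookkeeping rather than substance. One must make the counter-clockwise orientations of $\alpha$ (measured at $\point_j$) and $\beta$ (measured at $\point_i$) consistent, so that both are interior angles of the triangle in the upper half-plane. One must also confirm that the crossing point $X$ is interior to both segments, so that it counts as a crossing and not as a shared endpoint. The strict inequalities in the hypotheses $\alpha_t<\minAlpha$ and $\beta_t>\maxBeta$ are what give this.
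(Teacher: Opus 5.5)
Your proof is correct and uses the same two angular facts as the paper's proof, but it arranges them differently. The paper argues by contradiction using the straddle test for segment intersection. If some $\segment{\point_i}{\point_k}$ missed $\segment{\point_j}{\point_t}$, then $\alpha_t<\minAlpha\le\alpha_k$ would put $\point_i$ and $\point_k$ on opposite sides of the line through $\point_j$ and $\point_t$. Non-crossing would then force $\point_j$ and $\point_t$ onto the same side of the line through $\point_i$ and $\point_k$. Since $\beta_j=0<\beta_k$, this means $\beta_t<\beta_k\le\maxBeta$, a contradiction.

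You instead exhibit the crossing directly. Continuity of the polar angle seen from $\point_j$ (your counterpart of the first straddle) gives the point $X$ where the ray $\overrightarrow{\point_j\point_t}$ meets $\segment{\point_i}{\point_q}$. Strict monotonicity of the angle at $\point_i$ along that ray replaces the second straddle and puts $X$ before $\point_t$. The law of sines confirms this monotonicity: $\dist{\point_j}{Y}=\dist{\point_i}{\point_j}\sin\beta/\sin(\alpha_t+\beta)$ is increasing in $\beta$. The paper's version is shorter and matches the side-of-line (cross-product) tests used in the implementation. Yours is self-contained, does not assume the straddle criterion, and shows explicitly that $X$ is interior to both segments, which the paper leaves implicit.

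Your preliminary paragraph needs two small corrections; neither affects the main argument.

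First, it is $\alpha_t<\minAlpha<\pi$, not $\beta_t>\maxBeta$, that places $\point_t$ above $\StraightLine{\point_i}{\point_j}$. A point below the line has $\beta_t\in(\pi,2\pi)$ and so satisfies $\beta_t>\maxBeta$ trivially. The check you planned to do first would therefore fail as you phrased it.

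Second, the pre-processing of Proposition~\ref{thm:three_points_aligned} does not remove every value lying on that line. A $q\in\Dom{\Next_i}$ with $\point_q$ on the ray from $\point_j$ through $\point_i$ can survive. In that case $\alpha_q=0$, so $\minAlpha=0$ and the hypothesis $\alpha_t<\minAlpha$ cannot hold. Similarly, when $\alpha_t=0$ the pre-processing only handles the case $\point_i\in\segment{\point_j}{\point_t}$. If $\point_t$ lies strictly between $\point_j$ and $\point_i$, then $\beta_t=0\le\maxBeta$, so the hypothesis fails there as well.
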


\begin{proof}
By contradiction, suppose $\exists k\in \Dom{\Next_i}$ such that $\segment{\point_i}{\point_k}$ does not intersect $\segment{\point_j}{\point_t}$ (Figure~\ref{fig:sufficient_phase2}).
Since $\alpha_t < \minAlpha \leq \alpha_k$, $\point_k$ and $\point_i$ lie on different sides of the ray $\overrightarrow{\point_j\point_t}$.

In order not to have a crossing between $\segment{\point_i}{\point_k}$ and $\segment{\point_j}{\point_t}$, 
both $\point_j$ and $\point_t$ must lie on the same half-plane with respect to $\overrightarrow{\point_i\point_k}$.
Since $\beta_j=0$, and all points in the domain of $\Next_i$ are above the $x$ axis,
$\beta_j < \beta_k$, thus to be on the same half-plane, also $\beta_t < \beta_k$.
But $\beta_t > \maxBeta \geq \beta_k$: contradiction.
\end{proof}

\begin{figure}[tbp]
\centering
\includegraphics[width=\textwidth]{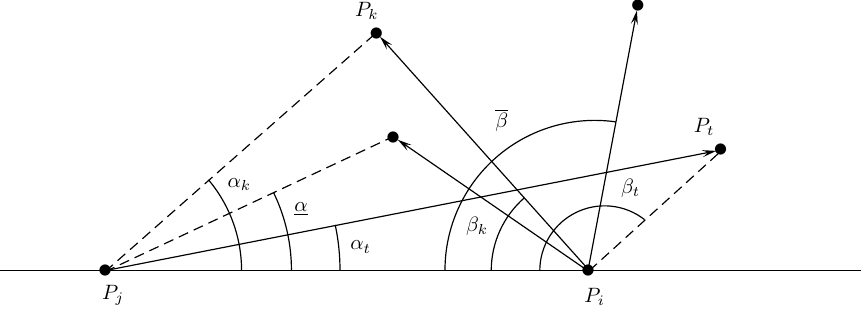}
\caption{\label{fig:sufficient_phase2} An arrow from $\point_x$ to $\point_y$ means that $y \in \Dom{\Next_x}$. Dashed lines are plotted to show the angles.}
\end{figure}

Thanks to Theorems~\ref{thm:necessary_phase2} and \ref{thm:sufficient_phase2} we can get a better complexity compared to the naive algorithm. Note that all angles can be pre-computed before starting the search, which avoids the computation of trigonometric functions during the search. It is also possible to pre-compute the elements in the (initial) domains of the two variables, sorted according to their angle $\alpha$, and then store them in a linked list. 
In this way, the computation of the minimum in line~\ref{alg:nocrossing:line_compute_minAlpha} of Algorithm~\ref{alg:nocrossing_propagator} is equivalent to finding the first element in the linked list that belongs to $\Dom{\Next_i} $; assuming that the domain membership is checked in constant time, the minimum on line~\ref{alg:nocrossing:line_compute_minAlpha} can be found in $O(d)$ amortized time on one branch of the search tree (if $d$ is the number of elements in the domain). 
In the same way, the elements in $\Dom{\Next_i}$ can be sorted by their angle $\beta$, and
line~\ref{alg:nocrossing:line_compute_maxBeta} is also executed in $O(d)$ amortized time on a branch of the search tree. The loop in lines~\ref{alg:nocrossing:line_forall_less_minAlpha} - \ref{alg:nocrossing:line_end_forall_less_minAlpha} searches the linked list and stops as soon as an element is found with an angle greater than or equal to $\minAlpha$; assuming that the removal of a domain element takes place in constant time, and since the comparison on line~\ref{alg:nocrossing:line_check_beta} takes a constant time, the entire loop has $O(d)$ complexity for each activation of the propagator of phase~2 (to be compared with the $ O(d^2)$ of the naive propagator). Since the propagator is activated when at least one element from $\Dom{\Next_i} $ is removed, this propagator is woken up at most $O(d)$ times in a branch of the search tree, which gives phase~2 a general complexity of $O(d^2)$ amortized time in one branch of the search tree.

\subsection{Convex hull reasoning and clockwise constraint}
\label{sec:convex}
A useful consequence of Theorem~\ref{thm:nocrosses} is given in the following Corollary~\ref{thm_c:convexhull} and is based on the concept of convex hull.
The {\em convex hull} $\ConvexHull{\PointSet}$ of a set of points $\PointSet$ in a Euclidean space is the minimum convex set containing all the points.
In the plane it corresponds to a convex polygon, and it is completely defined by its vertices.

\begin{corollary}[\citet{DBLP:journals/ipl/DeinekoDR94}]
\label{thm_c:convexhull}
``Assuming that not all cities lie on one line, an optimal tour has the property that the cities on the boundary
of the convex hull of the cities are visited in their cyclic order''.
\end{corollary}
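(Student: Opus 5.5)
We argue by contradiction using Theorem~\ref{thm:nocrosses} and Proposition~\ref{thm:three_points_aligned}. Let $\Circuit^*$ be an optimal tour and let $h_1,\dots,h_m$ be the hull points of $\PointSet$, listed in their cyclic order along the boundary of $\ConvexHull{\PointSet}$. Since not all points are aligned, $\ConvexHull{\PointSet}$ is a polygon with nonempty interior and $m\ge 3$. The claim is that, following $\Circuit^*$ from $h_1$, the hull points are met in the order $h_1,h_2,\dots,h_m$ or in its reverse. This is equivalent to saying that, for any two hull points $a,b$ that are \emph{not} consecutive along the boundary, the two arcs into which $\Circuit^*$ is cut at $a$ and $b$ each contain at least one hull point.

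Suppose instead that the hull points are visited out of cyclic order. Then there are four hull points $a,b,c,d$, in this cyclic order along the boundary, that are visited by $\Circuit^*$ in the interleaved order $a,c,b,d$. A cyclic sequence in which every 4-subsequence respects the boundary order is the boundary order or its reverse, so such a quadruple must exist. The tour therefore splits into four vertex-disjoint subpaths $\PathFromTo{a}{c}$, $\PathFromTo{c}{b}$, $\PathFromTo{b}{d}$, $\PathFromTo{d}{a}$.

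The core step is a planarity (Jordan-curve) argument inside the convex polygon.
\begin{enumerate}
\item Take the polygonal curve $\PathFromTo{a}{c}$. It lies inside $\ConvexHull{\PointSet}$, with its endpoints $a$ and $c$ on the boundary.
\item Because the tour is a simple circuit, these are its only points on the boundary, except possibly for points on boundary edges, which need separate handling (see below).
\item The curve therefore separates the hull region into two parts: one containing the boundary arc through $b$, the other containing the arc through $d$.
\item The subpath $\PathFromTo{b}{d}$ is a connected curve joining $b$ to $d$ inside the hull, so it must meet $\PathFromTo{a}{c}$.
\item The two subpaths share no vertices, so the meeting point is interior to some edge, or coincides with a node that is internal to an edge.
\end{enumerate}
In the first case we have two crossing edges of $\Circuit^*$ with four distinct endpoints. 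These endpoints are not all aligned: if they were, by Proposition~\ref{thm:three_points_aligned} the overlapping collinear edges would contain a node in their interior, which is excluded. So Theorem~\ref{thm:nocrosses} gives a contradiction. In the second case Proposition~\ref{thm:three_points_aligned} directly contradicts optimality.

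The main obstacle is making the separation argument rigorous and handling degeneracies. The key issue is that a tour edge may run along the hull boundary or touch it at non-vertex hull points. For the ordering itself I would include among the hull points all points of $\PointSet$ lying on the boundary, ordering collinear ones along their side. This is natural because Proposition~\ref{thm:three_points_aligned} forbids edges that skip over such a point.

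To avoid heavy topology, I would first try a simpler substitute for the Jordan argument. Since the simple polygonal arc $\PathFromTo{a}{c}$ meets the boundary only at $a$ and $c$, every boundary point other than $a$ and $c$ lies in one of two components of the hull minus the arc. Any path from $b$ to $d$ must cross the arc; equivalently, a parity count of crossings with a generic ray works. I expect this topological step, together with the collinear boundary cases, to be the only delicate part. The length comparisons are all inherited from Theorem~\ref{thm:nocrosses}.
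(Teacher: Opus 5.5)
Your argument is correct and follows the route the paper intends. The paper gives no proof of its own: it cites Deineko et al.\ and presents the statement as a consequence of the no-crossing Theorem~\ref{thm:nocrosses}, and your interleaving/Jordan-curve argument, with Proposition~\ref{thm:three_points_aligned} handling the collinear cases, is the derivation that remark leaves implicit.

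One claim needs tightening. For an arbitrary interleaved quadruple, $\PathFromTo{a}{c}$ may pass through other hull points, so it need not meet the boundary only at $a$ and $c$. There are two fixes. You can choose $a,c$ consecutive in the tour's order of boundary points but not consecutive along the boundary; such a pair exists by the equivalence in your first paragraph, and Proposition~\ref{thm:three_points_aligned} then rules out $\segment{a}{c}$ running along a hull side. Alternatively, note that two arcs in a convex region joining interleaved boundary points must meet regardless of any further boundary contacts.
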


We denote with $\HullSet{\PointSet} = \langle\Hull_0, \Hull_1, \dots, \Hull_{\NHull-1}\rangle$ the sequence of vertexes on the boundary of the convex hull in clockwise order. 
We also define a successor function $S_{\HullSet{\PointSet}}(\Hull_i) = \Hull_{(i+1) \mod \NHull}$ to  denote the successor of the point $\Hull_i$ in the sequence $\HullSet{\PointSet}$. 
Whenever it is clear from the context, we omit the sequence and indicate the successor of $\Hull_i$ with $\HSucc{\Hull_i}{\PointSet}$ instead of $S_{\HullSet{\PointSet}}(\Hull_i)$.
To compute $\HullSet{\PointSet}$, we used the widely known Andrew's monotone chain algorithm~\citep{AndrewMonotoneChain}, with complexity $O(\Npoints \log \Npoints)$. The algorithm calculates, with complexity $O(\Npoints)$, the upper and lower hull, which are then combined to form $\HullSet{\PointSet}$. However, this requires that the points are first sorted with respect to their $x$-coordinates, and also with respect to their $y$-coordinates in case of a tie, hence the complexity already mentioned.

To simplify the exposition, the following pruning is presented in the context of symmetry breaking constraints, although similar reasoning might be performed also while not breaking symmetries.

In the successor representation, the same \ac{TSP} can be represented by two symmetrical solutions that differ just for the order (clockwise or counter-clockwise) in which the nodes are visited.
One way to break this symmetry is to fix one direction (e.g., clockwise);
in such a case, the convex hull reasoning is an efficient way to impose the clockwise order.

Based on this classical cyclic-order property, we devised three ways to exploit the information about the hull for propagation.

The simplest is to impose that the successor of a convex hull vertex cannot be another vertex member of $\HullSetP$ except
for the one that immediately follows it, see Equation~\ref{eq:successor_constraint}.
\begin{equation}
\label{eq:successor_constraint}
\forall i \in [0,\NHull-1], \ 
\Dom{\Next_{\Hull_i}} \cap \HullSetP \subseteq \{\HSucc{\Hull_i}{\PointSet}\}
\end{equation}

Equation~\ref{eq:successor_constraint} represents a set of unary constraints, which are in practice equivalent to reducing the initial domain of the variables, so no overhead is produced during the search (see Figure~\ref{fig:hull0}). 

\begin{figure}[tbp]
\centering
\includegraphics[width=.5\textwidth]{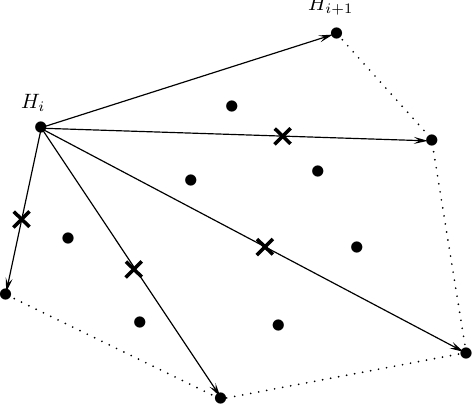}
 \caption{\label{fig:hull0} The successor of a convex hull vertex cannot be another vertex on the boundary of the convex hull except for the one that immediately follows it.}
\end{figure}

The second way is reasoning on the angle formed by the incoming and the outgoing arcs in hull vertexes:
in order to visit nodes in a clockwise order, the angle between the incoming edge and the outgoing edge
of $\Hull_i \equiv \point_h$ cannot be positive (it must be between $-\pi$ and 0) or, stated otherwise, it must correspond to a right turn (see Figure~\ref{fig:hull1}).

\begin{figure}[t]
\centering
\includegraphics[width=.9\textwidth]{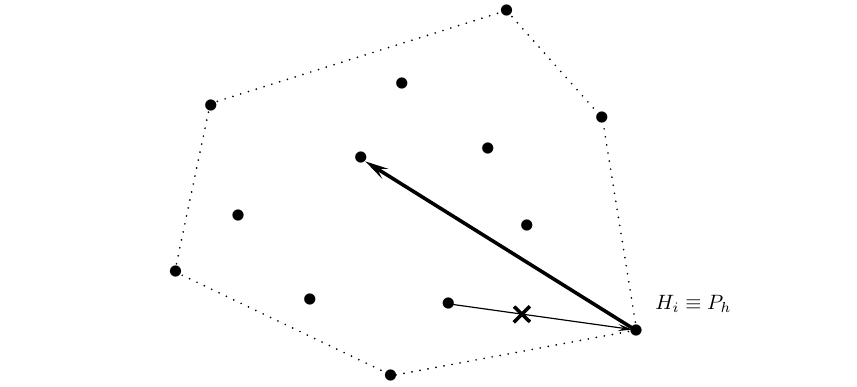}
 \caption{\label{fig:hull1} In order to visit nodes in a clockwise order, the angle between the incoming edge and the outgoing edge of a convex hull vertex cannot be positive (it must be between $-\pi$ and 0).}
\end{figure}

The easiest implementation consists of waiting until $\Next_h$ becomes instantiated with a value;
when the arc outgoing from $\point_h$ is fixed, the value $h$ is removed from the domain of all other variables $\Next_i$ such that
the angle $\Angle{\point_i \point_h \point_{\Next_h}}$ would correspond to a left turn.

If the constraint model also contains the \Prev\ variables, the angle formed by the outgoing edge and any reference direction must be smaller than the angle that the incoming edge forms with the same reference direction.
This produces a propagator in the same spirit of the classical {\em less-than} propagator: simply
compute the minimum angle in $\Dom{\Next_h}$ and remove from $\Dom{\Prev_h}$ the elements associated with a smaller (or equal) angle
(Algorithm~\ref{alg:clockwise_angle_propagator}). A symmetric propagator takes care of the opposite direction
(from $\Prev_h$ to $\Next_h$).
Again, note that all angles are pre-computed before search, and the search for the minimum takes $O(d)$ amortized time over one branch of the search tree.

\begin{algorithm}[htbp]
\small
\caption{clockwise\_angle\_propagator \label{alg:clockwise_angle_propagator}}
\begin{algorithmic}[1]
\Require $\point_h$ to be on the boundary of the convex hull
\Function{clockwise\_angle\_propagator}{$h,\Next,\Prev$}
\State $m \gets \min \{\alpha_v \mid v \in \Dom{\Next_h} \}$
\State $\Dom{\Prev_h} \gets \Dom{\Prev_h} \setminus \{i \mid \alpha_i \leq m\}$
\If {$\Next_h$ and $\Prev_h$ are not instantiated with a value}
	\State suspend until $m$ is removed from $\Dom{\Next_h}$
\EndIf
\EndFunction
\end{algorithmic}
\end{algorithm}

\begin{figure}[t]
\centering
\includegraphics[width=.5\textwidth]{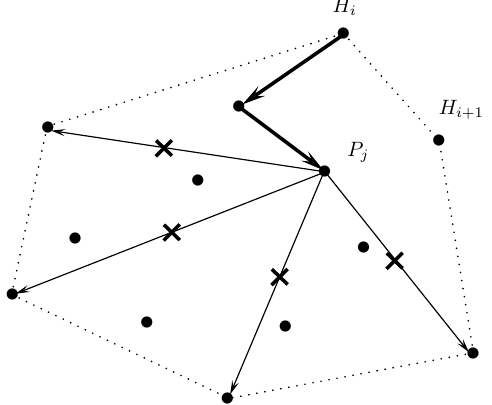}
 \caption{\label{fig:hull2} Each path originating from a convex hull vertex cannot reach any convex hull vertex except for the one immediately following it. Implementation is inspired by the \circuit\ constraint~\citep{CaseauLaburthe} but performs more powerful pruning.}
\end{figure}

The third way results from stating that any path starting from a convex hull vertex cannot reach any other convex hull vertex except the one that directly follows it. Put it more precisely, 
each vertex in a path originating from a point $\Hull_i$ cannot reach any vertex in $\HullSetP$ except for $\HSucc{\Hull_i}{\PointSet}$ (see Figure~\ref{fig:hull2}).
The propagator is imposed for each pair $(\Hull_i,\HSucc{\Hull_i}{\PointSet})$.
The implementation of this propagator is inspired by the \circuit\ constraint~\citep{CaseauLaburthe}
but performs more powerful pruning thanks to the convex hull reasoning.
If a partial path has been defined starting from $\Hull_i$ up to a node $j$, and 
such path does not contain vertex $\HSucc{\Hull_i}{\PointSet}$,
then the variable $\Next_j$ cannot take any value in $\HullSetP$ except for $\HSucc{\Hull_i}{\PointSet}$ (Algorithm~\ref{alg:hull_path_propagator}).
If the partial path reaches the next vertex of the convex hull, the constraint is entailed. In the propagator for the \circuit\ constraint~\citep{CaseauLaburthe}, instead, from the domain of the variable $\Next_j$ only the initial value
(in our case, $\Hull_i$) of the path is removed.
For each vertex $\Hull_i \in \HullSetP$, the propagator is initially invoked with $Start = \Hull_i$ and
$End = \HSucc{\Hull_i}{\PointSet}$. In the recursive calls, $End$ remains unchanged, while $Start$ advances along the currently instantiated path.

\begin{algorithm}[htbp]
\small
\caption{hull\_path\_propagator \label{alg:hull_path_propagator}}
\begin{algorithmic}[1]
\Require $\point_{End}$ to be on the boundary of the convex hull.
\Function{hull\_path\_propagator}{$Start,End,\HullSetP,\Next$}
\If {$Start$ == $End$}
	\State return $true$ \label{alg:hull_path_propagator:line_entailed}
\EndIf
\If {$\Next_{Start}$ is instantiated with a value}
	\State \Call {hull\_path\_propagator}{$\Next_{Start},End,\HullSetP,\Next$}
\Else
	\State remove $(\HullSetP \setminus \{End\})$ from $\Dom{\Next_{Start}}$
	\State suspend waiting for $\Next_{Start}$ to become instantiated with a value
\EndIf
\EndFunction
\end{algorithmic}
\end{algorithm}

\subsubsection{Extension of the convex hull reasoning}

Now that we have propagators that exploit the knowledge about the convex hull,
we wish to extend their applicability to the points in the interior 
of the hull as well.

One consequence of the absence of crossings is that the optimal \ac{TSP} is a
simple polygon, which is a closed polygonal chain of line segments that do not cross each other, and it divides the plane into exactly two areas: an {\em internal} and an {\em external} area.

Now imagine cutting the optimal \ac{TSP} with two vertical lines (parallel to the $y$ axis):
the stripe between the two lines will contain alternate internal and external areas.
The borders (i.e., the parts of the circuit inside the stripe) will be visited alternately from left to right (or clockwise) and from right to left (or counter-clockwise).
To exploit this informal intuition for pruning, 
we provide the following theorem:

\begin{theorem}
\label{thm:HullRicorsiva}
Suppose that (e.g., during search) a partial path $\PathFromTo{s}{e}$ has been defined, starting in node $s$ and ending in node $e$.
Consider the polygon $Q$ delimited by such path and by the segment $\segment{\point_s}{\point_e}$, and suppose that such polygon is a simple polygon,
i.e., no two edges intersect.
Suppose that the partial path $\PathFromTo{s}{e}$ touches its vertexes in clockwise order.

Let $F \subset V$ be the set of nodes whose corresponding points lie in the interior of polygon $Q$, $I = F \cup \{s,e\}$ and let $\HullSetI =\langle\Hull^I_0\equiv e,\Hull^I_1, \dots,\Hull^I_{k-1},\Hull^i_k \equiv s\rangle$ be the sequence of vertexes of its convex hull, in counter-clockwise order.

Suppose that the convex hull $\ConvexHull{I}$ does not intersect the path \PathFromTo{s}{e}, except for the endpoints $\point_s$ and $\point_e$.

Then any non self-crossing tour containing 
the path $\PathFromTo{s}{e}$ reaches the vertexes in $\HullSetI$ in the order $\Hull^I_0, \dots \Hull^I_k$.
\end{theorem}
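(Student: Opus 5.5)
The plan is to reduce to the situation of Corollary~\ref{thm_c:convexhull}, replacing the outer convex hull with a closed region bounded by the partial path, and to use a Jordan-curve argument. Let $T$ be any non self-crossing tour containing $\PathFromTo{s}{e}$. Its remaining part is a path $\PathFromTo{e}{s}'$ from $\point_e$ back to $\point_s$ that visits all the other nodes, and it touches $\PathFromTo{s}{e}$ only at its endpoints. Since $T$ is a simple closed polygon and $\PathFromTo{s}{e}$ is one of its arcs, the whole plane splits into the interior and exterior of $T$. The nodes of $F$ lie strictly inside $Q$, so they are visited by the complementary path $\PathFromTo{e}{s}'$.

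\textbf{Step 1: localisation.} I will show that $\PathFromTo{e}{s}'$ must enter the polygon $Q$ to reach the nodes in $F$. It can only cross the boundary $\partial Q = \PathFromTo{s}{e}\cup\segment{\point_s}{\point_e}$ through the segment $\segment{\point_s}{\point_e}$, because it cannot cross $\PathFromTo{s}{e}$. The hypothesis that $\ConvexHull{I}$ meets $\PathFromTo{s}{e}$ only at $\point_s,\point_e$ places $\ConvexHull{I}$ inside $Q$, with $\segment{\point_s}{\point_e}$ as one of its edges (consecutive in $\HullSetI$, closing $\Hull^I_k\to\Hull^I_0$). I then consider the sub-arcs of $\PathFromTo{e}{s}'$ lying in $Q$. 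Each one enters and exits through $\segment{\point_s}{\point_e}$, or starts or ends at $\point_e$ or $\point_s$.

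\textbf{Step 2: ordering.} I will argue that the hull vertices $\Hull^I_1,\dots,\Hull^I_{k-1}$ are visited in the cyclic order of $\partial\ConvexHull{I}$. The intended mechanism is the argument behind Corollary~\ref{thm_c:convexhull}. Consider the closed curve $\Gamma$ formed by $\PathFromTo{s}{e}$ together with the part of $T$ from $\point_e$ to $\point_s$. Suppose two hull vertices $\Hull^I_a,\Hull^I_b$ with $a<b$ are visited as $\Hull^I_b$ before $\Hull^I_a$. Then I join the hull vertices by chords of $\ConvexHull{I}$ and track which side of the chord $\segment{\Hull^I_a}{\Hull^I_b}$ the tour lies on. The sub-path of $T$ from $\point_e$ through $\Hull^I_b$ to $\Hull^I_a$, continued to $\point_s$, must then cross the sub-path joining them in the other order. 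Here I use that every hull vertex is extreme, so any curve through it stays in the closed half-plane on the tour's side of the supporting line. This produces a crossing, contradicting simplicity.

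\textbf{Step 3: direction.} It remains to show the order is $\Hull^I_0,\dots,\Hull^I_k$ (counter-clockwise, from $e$ to $s$) and not its reverse. Since $T$ leaves $\point_e$ and returns to $\point_s$, the traversal of $\HullSetI$ must begin at $\Hull^I_0=e$ and end at $\Hull^I_k=s$. The cyclic order from Step~2, restricted by these fixed endpoints, then forces exactly this sequence. The clockwise hypothesis on $\PathFromTo{s}{e}$ is what matches it with counter-clockwise labelling of $\HullSetI$, consistent with the orientation of $T$.

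The main obstacle is Step~2 when the tour exits and re-enters $Q$ several times through $\segment{\point_s}{\point_e}$, because the visits to hull vertices are then interleaved with excursions outside $Q$. I would first try to handle this topologically. Each excursion outside $Q$ returns across $\segment{\point_s}{\point_e}$, and the portions inside $Q$ are pairwise non-crossing chords of the region $Q$ with endpoints on $\segment{\point_s}{\point_e}$, which are therefore nested like parentheses. Combining this nesting with the extremality of hull vertices should still yield the monotone order. If this becomes unwieldy, a fallback is to reduce to the classical statement. I would apply Corollary~\ref{thm_c:convexhull}'s non-crossing argument to the closed polygon formed by $\segment{\point_s}{\point_e}$ and the arcs of $T$ inside $\ConvexHull{I}$.
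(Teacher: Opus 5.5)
Your Step~3 is fine, but Step~2, which carries all the content, has a genuine gap.

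\textbf{What fails in Step~2.} The claim that `any curve through [a hull vertex] stays in the closed half-plane on the tour's side of the supporting line' is false for the tour. Its remaining part leaves $Q$ through $\segment{\point_s}{\point_e}$ and may go anywhere outside $Q$. Extremality of $\Hull^I_a$ within $I$ says nothing about where the tour goes. There is also no `sub-path joining them in the other order' for the tour to cross. The remaining tour is a single arc from $\point_e$ to $\point_s$, so you must exhibit two disjoint sub-arcs and a disk on whose boundary their endpoints alternate. You never identify that disk.

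The deeper problem is that nothing in your argument stops the tour from entering the region $W$ of $Q$ between $\PathFromTo{s}{e}$ and the hull chain $\Hull^I_0,\dots,\Hull^I_k$. A purely topological argument cannot work here. If tour arcs could be curves, the tour could run from $\point_e$ through $W$ and reach $\Hull^I_2$ from outside $\ConvexHull{I}$. It could then go to $\Hull^I_1$ and on to $\point_s$ inside $\ConvexHull{I}$ with no crossing. Your fallback has the same hole. With several excursions, $\segment{\point_s}{\point_e}$ and the arcs inside $\ConvexHull{I}$ do not form one closed polygon. You also never showed that the tour's arcs inside $Q$ lie in $\ConvexHull{I}$ at all.

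\textbf{The missing lemma.} The tour never enters $W$. The paper relies on this in the last step of its proof, when it says the circuit cannot cross the hull perimeter between $\point_e$ and $\Hull^I_j$ or between $\Hull^I_{j+2}$ and $\point_s$. It justifies this briefly by the absence of nodes of $Q$ outside $\ConvexHull{I}$. A full proof needs three facts: edges are straight segments between nodes, $W$ contains no node, and $\ConvexHull{I}$ is convex with $\segment{\point_s}{\point_e}$ as an edge. Two cases then cover every edge.
\begin{itemize}
\item An edge with an endpoint in $I$ meets $Q$ only in a segment joining that endpoint to either the other endpoint or its crossing point with $\segment{\point_s}{\point_e}$. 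Both points lie in $\ConvexHull{I}$, so by convexity this segment stays in $\ConvexHull{I}$.
\item An edge between two nodes outside $Q$ cannot enter $Q$. It meets $\StraightLine{\point_s}{\point_e}$ at most once and cannot cross $\PathFromTo{s}{e}$.
\end{itemize}

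\textbf{How the argument then closes.} Take $j$ maximal such that the first hull vertices reached are $\Hull^I_0,\dots,\Hull^I_j$ in order. The next one reached is some $\Hull^I_m$ with $m>j+1$. Then $\Hull^I_{j+1}$ is trapped in the region $R$ bounded by four pieces:
\begin{itemize}
\item the path $\PathFromTo{s}{e}$;
\item the hull perimeter from $\point_e$ to $\Hull^I_j$;
\item the tour sub-path from $\Hull^I_j$ to $\Hull^I_m$;
\item the hull perimeter from $\Hull^I_m$ to $\point_s$.
\end{itemize}
The tour can cross none of these, so it never reaches $\Hull^I_{j+1}$.

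Equivalently, in the language you were aiming for: the remaining tour lies in the closed exterior of the Jordan curve formed by $\PathFromTo{s}{e}$ and the hull chain. On that curve, $\Hull^I_0,\dots,\Hull^I_k$ appear in order. An out-of-order pair $a<b$ would then give two disjoint sub-arcs with alternating endpoints, one from $\point_e$ to $\Hull^I_b$ and one from $\Hull^I_a$ to $\point_s$, which is impossible. This also makes your parenthesis-nesting analysis of the excursions unnecessary.
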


\begin{proof}
By contradiction, suppose that a non self-crossing circuit $\Circuit^* \supseteq \PathFromTo{s}{e}$ reaches the vertexes of $\Hull^I$ 
in an order different from their sequence order.
W.l.o.g., suppose that after vertex $\Hull^I_j$, the next vertex of $\Hull^I$ reached by the tour $\Circuit^*$ is $\Hull^I_{j+2}$;
i.e., vertex $\Hull^I_{j+1}$ is not reached in the order of $\Hull^I$.
Let $\PathFromTo{\Hull^I_j}{\Hull^I_{j+2}} \subset \Circuit^*$ be the path connecting $\Hull^I_j$
and $\Hull^I_{j+2}$.

Consider the polygon $R$ (dotted, in Fig.~\ref{fig:HullRicorsiva})
delimited by:
\begin{enumerate}[label=(\roman*), leftmargin=*]
\item the path $\PathFromTo{s}{e}$;
\item the perimeter of the convex hull $\HullSetI$ from $\point_e$
      to $\Hull^I_j$;
\item the path $\PathFromTo{\Hull^I_j}{\Hull^I_{j+2}}$;
\item the perimeter of the convex hull $\HullSetI$ from
      $\Hull^I_{j+2}$ to $\point_s$.
\end{enumerate}

Clearly, the point $\Hull^I_{j+1}$ lies in the interior of $R$.
In order to reach it without self-crossings, $\Circuit^*$ cannot pass through $\PathFromTo{s}{e}$
nor $\PathFromTo{\Hull^I_j}{\Hull^I_{j+2}}$.
On the other hand, $\Circuit^*$ cannot cross $\HullSetI$ between $\point_e$ and $\Hull^I_j$
(and from $\Hull^I_{j+2}$ to $\point_s$) because, by definition of convex hull, there are no vertexes to be visited that are interior to polygon $Q$
and that do not belong to $\HullSetI$.
So, $\Circuit^*$ cannot reach $\Hull^I_{j+1}$. \end{proof}

\begin{figure}[htbp]
\centering
\includegraphics[width=.9\textwidth]{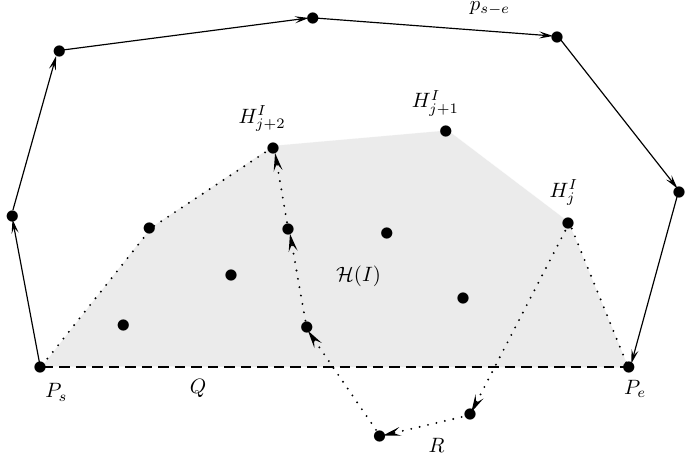}
\caption{\label{fig:HullRicorsiva} From Theorem~\ref{thm:HullRicorsiva}: the path $\PathFromTo{s}{e}$ is the current assignment. Polygon $Q$ is delimited by $\PathFromTo{s}{e}$
and the (dashed) segment $\segment{\point_e}{\point_s}$. $I$ contains the points strictly inside $Q$ plus $\point_s$ and $\point_e$; $\ConvexHull{I}$ (grey in the picture) is its convex hull. Polygon $R$ is delimited by $\PathFromTo{s}{e}$ and the dotted segments.}
\end{figure}

If we find an {\em internal} hull $\ConvexHull{I}$ (see Figure~\ref{fig:HullRicorsiva}), we can then apply the previous propagators 
(Algorithms~\ref{alg:clockwise_angle_propagator} and \ref{alg:hull_path_propagator})
also to $\HullSetI$, with the obvious care that if $\PathFromTo{s}{e}$ reaches points in clockwise order,
then $\HullSetI$ will be reached in counter-clockwise order (and vice versa). We maintain all partial paths during the search, and we compute a convex hull for each of these paths. In this way, we are orthogonal to heuristics (we can use any search heuristic without invalidating our propagation).

In the implementation, we applied the pruning on internal hulls
only when the polygon $Q$ is convex.
Checking the convexity amounts to check that each turn in the path is on the same side (a right turn, on a clockwise path),
it makes it easier to find when a point is inside the polygon and also allows us to avoid checking that $\ConvexHull{I}$ does not intersect $\PathFromTo{s}{e}$.

The time complexity of our implementation of the extended convex hull is $O(n^2)$ to compute the points inside the polygon, then we use Andrew's monotone chain ($O(\Npoints \log \Npoints)$)~\citep{AndrewMonotoneChain} to find the hull. We currently recompute $\HullSetI$ from scratch after each decision.

\section{Geometric reasoning for the Euclidean Generalized TSP} 
\label{sec:egtsp_hull}

The propagation mechanisms developed in Section~\ref{sec:nocrossing} are not limited to the Euclidean TSP, but can also be extended to other routing problems and variants of \ac{TSP} that use the Euclidean distance as a metric. 
As an example, consider the \acf{GTSP};
in the \ac{GTSP}, also known as \emph{set TSP}, the set $V$ of nodes of the graph is partitioned into $C_1, \dots, C_m$ subsets where each subset of nodes $C_i$ is called {\em cluster} and $C_j \cap C_k = \emptyset$ for all $j, k \in \lbrace 1, \dots, m\rbrace$. 
We use $\ClusterOp{\point_i}$ to denote the cluster associated with point ${\point_i}$. 
The objective is to compute the minimum cost cycle that visits each cluster at least once. 
The \ac{GTSP} has a wide number of applications~\citep{NoonBean1991, Laporte1996} including sequencing of computer files, routing of welfare customers through governmental agencies, design of ring networks, flexible manufacturing scheduling, airport selection and routing for courier planes,  and postal routing.
The \acf{EGTSP}, similarly to the \ac{ETSP}, uses Euclidean distance as its metric and the coordinates of the points on the plane are known. 

As in the \ac{ETSP}, we employed the successor representation to model the \ac{EGTSP}:
the constraint model (Equation~\ref{eq:set-constraintmodel}) involves \Npoints\ variables $\Next_i$. 
Since in the \ac{EGTSP} not all nodes are visited, we take the convention proposed in~\cite{explainingCircuit} that unvisited nodes have themselves as successors (i.e. $\Next_i = i$);
clearly this requires a modified version of the \circuit\ constraint~\cite{explainingCircuit} that accepts sub-tours of length 1 (Equation~\ref{eq:set-circuit}).
The model includes the {\tt alldifferent} constraint (Equation~\ref{eq:set-alldiff}) to avoid two nodes from having the same successor, and a constraint imposing
that at least one node is selected for each cluster (Equation~\ref{eq:set-cluster}).

\begin{subequations}
\begin{align}
{\tt alldifferent}(\Next) &\label{eq:set-alldiff} \\
{\tt set\_circuit}(\Next) & \label{eq:set-circuit} \\
\sum_{i \in C_k} \vert\Next_i - i\vert \geq 1 & \;\;\; \forall k \in \lbrace 1, \dots, m\rbrace \label{eq:set-cluster}
\end{align}
\label{eq:set-constraintmodel}
\end{subequations}

Having introduced the constraint model, we now investigate how the
geometric propagation developed for the \ac{ETSP} can be adapted to
the \ac{EGTSP}.

In \ac{EGTSP} we cannot directly apply the pruning introduced in Section~\ref{sec:convex} because the vertexes on the boundary of the convex hull may not be visited. Obviously in case such points are visited then they must be visited in the correct order, as already shown for the \ac{ETSP}. In the following we will show how, under certain conditions, such pruning can also be extended to other ``internal" points.

To simplify the following discussion, Definition~\ref{def:neighbours} introduces the concept of {\em set of neighbours} of a point on the boundary of the convex hull.

\begin{definition}
\label{def:neighbours}
Given $H_i \in \HullSet{\PointSet}$ a point $\point_j \in \ClusterSet{H_i}$ is called a {\em neighbour} of $\Hull_i$ if and only if the segment $\segment{\Hull_i}{\point_j}$ does not intersect $\ConvexHull{\PointSet \setminus \ClusterSet{H_i}}$.
The set $\Vicini{\Hull_i}$ is the set of all points $\point_j \in \ClusterSet{H_i}$ that are neighbours of $\Hull_i$ (by definition $\Hull_i \in \Vicini{\Hull_i}$).
\end{definition}

In the example in Figure~\ref{fig:neighbours}, points have a color representing the cluster they belong to.
Among the points in the green cluster, only $\point_k$ is not a neighbor of $\Hull_i$,
because the segment $\segment{\Hull_i}{\point_k}$ intersects the convex hull $\ConvexHull{\PointSet \setminus \ClusterSet{H_i}}$ of the non-green points.

Now we can extend the reasoning of the convex hull to the \ac{GTSP}.

\begin{figure}[tbp]
		\centering
\includegraphics[width=.65\textwidth]{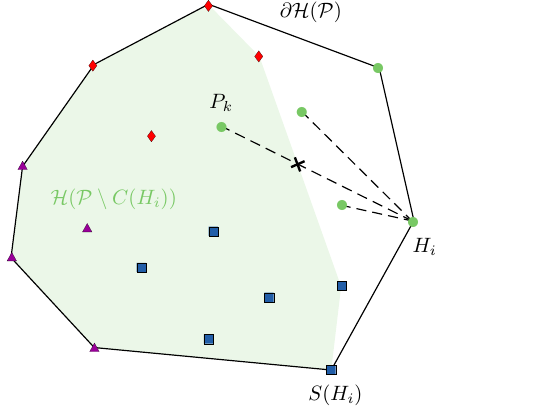}
\caption{From Definition~\ref{def:neighbours} $\point_k$ is not a neighbor of $\Hull_i$ because the segment $\segment{\Hull_i}{\point_k}$ intersects the convex hull $\ConvexHull{\PointSet \setminus \ClusterSet{H_i}}$.}
	\label{fig:neighbours}
\end{figure}



\begin{theorem}
\label{thm:hull_neighbours}

Let $\point_g \equiv \Hull_i$ be a point in
$\HullSet{\PointSet}$, and let
$\point_b \equiv \HSucc{\Hull_i}{\PointSet}$
be its successor in the sequence $\HullSet{\PointSet}$.
Let $C_g$ and $C_b$ be the clusters containing $\point_g$ and
$\point_b$, respectively, with $C_g \neq C_b$.

Let
$
\Phi_{\point_g}
= \ConvexHull{\Vicini{\point_g} \cup \Vicini{\point_b}}
$
be the convex hull of the neighbours of $\point_g$ and $\point_b$,
and let
$
\Phi_{\point_g}^{'}
= \ConvexHull{
    \PointSet \setminus
    \left(
        \ClusterSet{\point_g} \cup \ClusterSet{\point_b}
    \right)
}
$
be the convex hull of all points belonging to clusters other than
$C_g$ and $C_b$.

If
$\Phi_{\point_g} \cap \Phi_{\point_g}^{'} = \emptyset$,
then no point in $\Vicini{\point_b}$ can have a point in
$\Vicini{\point_g}$ as its successor in any optimal \ac{EGTSP}.
\end{theorem}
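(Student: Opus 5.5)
The plan is to reduce the statement to the convex-hull ordering property (Corollary~\ref{thm_c:convexhull}) applied not to the whole set $\PointSet$, but to the set of points actually visited by an optimal tour. Throughout, I assume the same clockwise symmetry-breaking convention used in Section~\ref{sec:convex}. I also assume there is at least one cluster other than $C_g$ and $C_b$, so that $\Phi_{\point_g}^{'}$ is non-empty; otherwise the statement is trivial or must be handled separately.

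\textbf{Step 1 (normalisation).} Let $\Circuit^*$ be an optimal \ac{EGTSP} tour that visits two consecutive nodes $x \in \Vicini{\point_b}$ and $y \in \Vicini{\point_g}$ with $\Next_x = y$. By the triangle inequality, any node visited in excess in a cluster can be shortcut without increasing the length. So I may assume $\Circuit^*$ visits exactly one node per cluster, and I will keep the edge $x\to y$ unchanged. This needs care: if $C_g$ or $C_b$ is visited more than once, I would shortcut only the other occurrences, so that $x$ and $y$ remain the representatives of $C_b$ and $C_g$. Let $W$ be the set of visited nodes. Then $\Circuit^*$ is an optimal Euclidean TSP tour on $W$, and I assume $W$ is not all aligned.

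\textbf{Step 2 (key geometric step, main obstacle).} I must show that $y$ and $x$ are vertices of $\HullSet{W}$, and that $x$ immediately follows $y$ in the clockwise order. Because $\Phi_{\point_g}$ and $\Phi_{\point_g}^{'}$ are disjoint compact convex sets, there is a separating line $\ell$. All the other visited points lie in $\Phi_{\point_g}^{'}$, on one side of $\ell$, while $x$ and $y$ lie on the other side. The hull edge $\segment{\point_g}{\point_b}$ of $\PointSet$ supports $\PointSet$. Since the segments $\segment{\point_g}{y}$ and $\segment{\point_b}{x}$ avoid $\ConvexHull{\PointSet\setminus C_g}$ and $\ConvexHull{\PointSet\setminus C_b}$ respectively, I expect to show the following. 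The portion of $\HullSet{W}$ lying on the $\Phi_{\point_g}$ side of $\ell$ consists exactly of $y$ and $x$, and it inherits the clockwise orientation $g\to b$ from $\HullSet{\PointSet}$. Equivalently, the hull chain of $W$ enters the $\Phi_{\point_g}$ side, visits $y$, then $x$, then returns.

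To prove this, I would first use the separating line to argue that any visited point extreme in a direction pointing toward $\Phi_{\point_g}$ must be $x$ or $y$. I would then use the neighbour condition to rule out the reversed order $x$ before $y$. This orientation argument is where I expect the real difficulty; I might need an extra appeal to the neighbour definition to exclude configurations in which $y$ ends up "after" $x$.

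\textbf{Step 3 (conclusion).} Once Step 2 is established, Corollary~\ref{thm_c:convexhull} applied to $W$ says that $\Circuit^*$ visits the vertices of $\HullSet{W}$ in clockwise cyclic order. In particular, $\Circuit^*$ travels from $y$ to $x$ along a path that does not include the edge $x\to y$. With at least three clusters, $\Circuit^*$ also contains nodes outside $\{x,y\}$, so it cannot contain both a path from $y$ to $x$ and the edge $x\to y$ in the clockwise direction. This contradicts $\Next_x=y$.

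The degenerate case in which all points of $W$ are aligned would be handled separately using Proposition~\ref{thm:three_points_aligned}.
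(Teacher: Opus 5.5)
Steps 1 and 3 are essentially fine. The gap is Step 2, which you flag as the main obstacle: it carries the whole content of the theorem and is only announced, not proved.

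Two things are missing there. First, you need both $x$ and $y$ to be vertices of $\ConvexHull{W}$. The separating line $\ell$ only shows that the hull vertices of $W$ on the $\Phi_{\point_g}$ side form a contiguous chain drawn from $\{x,y\}$; it does not show that both occur. This part is easy from the neighbour definition, and it is how the paper's proof begins: $y$ is an endpoint of $\segment{\point_g}{y}$, so $y\notin\ConvexHull{\PointSet\setminus C_g}\supseteq\ConvexHull{W\setminus\{y\}}$, and symmetrically for $x$.

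Second, and decisively, you need the orientation: in clockwise order $y$ is immediately followed by $x$, i.e.\ every other visited point $r$ lies to the right of $\overrightarrow{yx}$. The separating line is symmetric in $x$ and $y$ and says nothing about this. Yet the direction is exactly what distinguishes the admissible $\Next_y=x$ from the forbidden $\Next_x=y$, so without it Step 3 has nothing to work with.

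The missing idea is to read the neighbour condition as a non-crossing condition. The segment $\segment{\point_g}{y}$ meets no segment joining two points outside $C_g$ (e.g.\ $\segment{\point_b}{x}$ or $\segment{r}{x}$). Likewise $\segment{\point_b}{x}$ meets no segment joining two points outside $C_b$ (e.g.\ $\segment{r}{y}$). Hence, in general position:
\begin{itemize}
\item The points $\point_g,\point_b,x,y$ form a simple quadrilateral $D\subseteq\Phi_{\point_g}$. It is traversed clockwise in this order because $\segment{\point_g}{\point_b}$ is a clockwise edge of $\ConvexHull{\PointSet}$, so the interior of $D$ lies to the right of $\overrightarrow{xy}$.
\item Since $\point_g$ and $\point_b$ lie outside the triangle $rxy$, the chain $y\,\point_g\,\point_b\,x$ meets that triangle only in $x$ and $y$.
\item Suppose $r$ were to the right of $\overrightarrow{xy}$. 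Then the interior of the triangle $rxy$ avoids $\partial D$ and overlaps $D$ near $\segment{x}{y}$, so it lies inside $D$. This gives $r\in D\subseteq\Phi_{\point_g}$, contradicting $\Phi_{\point_g}\cap\Phi'_{\point_g}=\emptyset$.
\end{itemize}

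The paper runs the same argument on a single point: the successor $\point_k$ of $y$, which must lie to the right of $\overrightarrow{xy}$ by the right-turn property at the hull vertex $y$. Its three cases are the ways the contradiction arises: $\point_k$ lies inside the quadrilateral; $\point_k$ lies beyond the hull edge $\segment{\point_g}{\point_b}$; or a segment from $\point_k$ crosses $\segment{\point_g}{y}$ or $\segment{\point_b}{x}$.

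Once this orientation lemma is in place, your Step 3 via the cyclic-order corollary on $W$ is a legitimate alternative to the paper's right-turn conclusion. State it as: the hull vertex following $x$ would have to be $y$, whereas $\ConvexHull{W}$ has a third vertex.
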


\begin{proof}
Let points $\point_{g^{'}}$ and $\point_{b^{'}}$ be neighbour of point $\point_g$ and neighbour of point $\point_b$ respectively, and let $c^{*}$ be an optimal \ac{EGTSP} that includes both the point $\point_{g^{'}}$ and the point $\point_{b^{'}}$.
Firstly, we want to prove that both points $\point_{g^{'}}$ and $\point_{b^{'}}$ are in $\HullSet{c^{*}}$.
By definition of neighbour of a point on the boundary of the convex hull, no point in the segment $\segment{\point_g}{\point_{g'}}$ is in $\ConvexHull{\PointSet \setminus \ClusterSet{\point_g}}$, so every point $\point_j$ of the segment is on the border $\HullSet{\left(\PointSet \setminus \ClusterSet{\point_g}\right) \cup \lbrace \point_j \rbrace}$.
In an optimal \ac{EGTSP} there is only one point for each cluster so if we have selected $\point_{g^{'}}$ there cannot be any other point belonging to cluster $C_g$, 
it follows that $c^{*} \subset \ConvexHull{\left(\PointSet \setminus \ClusterSet{\point_g}\right) \cup \lbrace \point_{g^{'}} \rbrace}$; 
since $\point_{g^{'}} \in \HullSet{\left( \PointSet \setminus \ClusterSet{\point_g}\right) \cup \lbrace \point_{g^{'}} \rbrace}$, also
$\point_{g^{'}} \in \HullSet{c^{*}}$.
A similar argument holds for the point $\point_{b^{'}}$, so we proved that they are both points in $\HullSet{c^{*}}$.

Suppose now, by contradiction, that $\point_{b^{'}}$ has $\point_{g^{'}}$ as its successor in $c^{*}$ (see Figure~\ref{fig:neighbour-proof-1}).
Since $\point_{g'}$ is in the border $\HullSet{c^{*}}$,
 the angle formed by the segment $\segment{\point_{b'}}{\point_{g'}}$ and the segment from $\point_{g'}$ to its successor must correspond to a right turn (it must be between $-\pi$ and $0$). 
It follows that the successor of $\point_{g'}$ in $c^*$, denoted by $P_k$,
must be to the right of the straight line $\StraightLine{\point_{b'}}{\point_{g'}}$ for $c^{*}$ to be optimal.

According to the position of the point $\point_k$, it is necessary to distinguish 3 different cases. 

In the first case the point $\point_k$ lies in the interior of polygon $\point_b\point_b^{'}\point_g\point_g^{'}$ and therefore $\point_k \in \Phi_{\point_g}$ but simultaneously $\point_k \in \Phi_{\point_g}^{'}$ because $\ClusterSet{\point_k} \neq \ClusterSet{\point_g}$ and $\ClusterSet{\point_k} \neq \ClusterSet{\point_b}$ contradicting the condition $\Phi_{\point_g} \cap \Phi_{\point_g}^{'} = \emptyset$ (see Figure~\ref{fig:neighbour-proof-2}).

In the second case both segments $\segment{\point_{k}}{\point_{g'}}$ and $\segment{\point_{k}}{\point_{b'}}$ intersect the segment $\segment{\point_{g}}{\point_{b}}$ so it is not true that $\point_b$ is the successor of $\point_g$ in $\HullSet{\PointSet}$ (see Figure~\ref{fig:neighbour-proof-3}).

If the first two cases do not occur then either $\segment{\point_{k}}{\point_{b'}}$ intersects $\segment{\point_{g}}{\point_{g'}}$ or $\segment{\point_{k}}{\point_{g'}}$ intersects $\segment{\point_{b}}{\point_{b'}}$. In the case where $\segment{\point_{k}}{\point_{b'}}$ intersects $\segment{\point_{g}}{\point_{g'}}$ it means that $\ConvexHull{\PointSet \setminus \ClusterSet{\point_g}}$ intersects $\segment{\point_{g}}{\point_{g'}}$  so $\point_{g'}$ by definition cannot be a neighbour of $\point_g$ (see Figure~\ref{fig:neighbour-proof-4}). For the same reason $\point_{b'}$ is not neighbour of $\point_b$ if $\segment{\point_{k}}{\point_{g'}}$ intersects $\segment{\point_{b}}{\point_{b'}}$.

\end{proof}

\begin{figure}[tbp]
    \centering
    \begin{subfigure}{.5\textwidth}
            \centering
			\includegraphics[width=\textwidth, height=\textwidth]{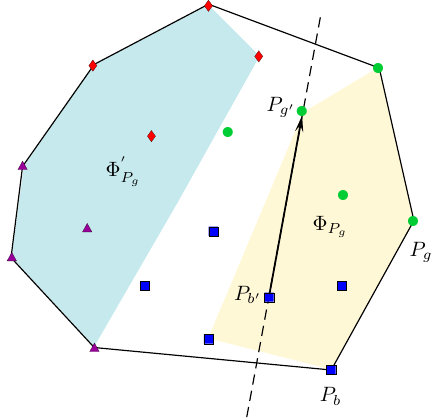}
			 \caption{}\label{fig:neighbour-proof-1}
    \end{subfigure}%
    \hfill 
    \begin{subfigure}{.5\textwidth}
            \centering
			\includegraphics[width=\textwidth, height=\textwidth]{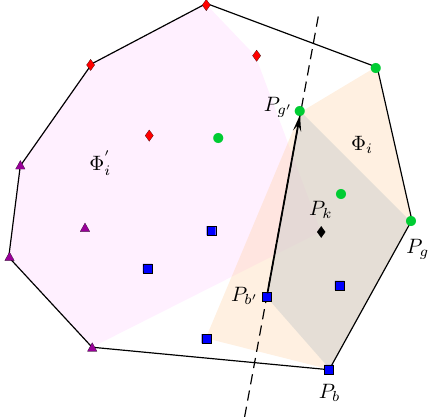}
			 \caption{}\label{fig:neighbour-proof-2}
		    \end{subfigure}

		 \vskip\baselineskip
	     
            \begin{subfigure}{.5\textwidth}
            \centering
			\includegraphics[width=\textwidth, height=\textwidth]{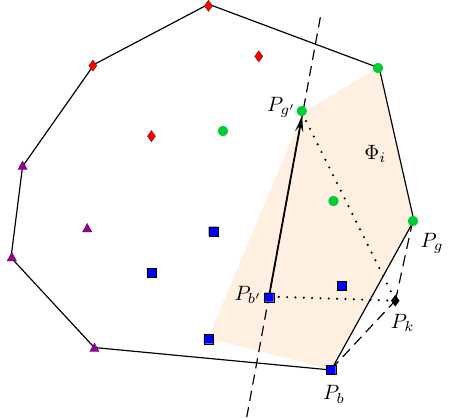}
			 \caption{}
			 \label{fig:neighbour-proof-3}
    \end{subfigure}%
    \hfill 
    \begin{subfigure}{.5\textwidth}
            \centering
			\includegraphics[width=\textwidth, height=\textwidth]{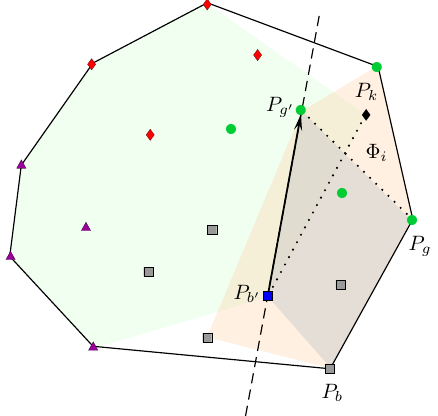}
			 \caption{}\label{fig:neighbour-proof-4}
		    \end{subfigure}

		 \caption{From Theorem~\ref{thm:hull_neighbours} $\point_{b^{'}}$ cannot have $\point_{g^{'}}$ as successor in any optimal \ac{EGTSP} if $\Phi_{\point_g} \cap \Phi_{\point_g}^{'} = \emptyset$.}
		 \label{fig:neighbour-proof}
\end{figure}

The implementation of a propagator exploiting Theorem~\ref{thm:hull_neighbours} is given in Algorithm~\ref{alg:hullset_propagator}.

In the proof of the Theorem~\ref{thm:hull_neighbours} we also proved that neighbours of a point in $\HullSetP$, if selected, will be on the boundary of the convex hull of the optimal solution so it is possible to impose the clockwise constraint (see Section~\ref{sec:convex}) on them as well. 
In the experimental evaluation, we applied only the reasoning on the angle formed by the incoming and the outgoing arcs in hull vertexes, using the easiest implementation, which consists of waiting until variables become instantiated with a value.

\begin{algorithm}[t]
\small
\caption{hull\_set\_propagator \label{alg:hullset_propagator}}

\begin{algorithmic}[1]

\Function{find\_neighbours}{$\HullSetP,\Next$}
\ForEach {$\Hull_i \in \HullSetP$}
	\State $\Vicini{\Hull_i} \leftarrow \emptyset$
\ForEach {$\point_j \in \ClusterSet{H_i}$}
		\If {$\segment{\Hull_i}{\point_j}$ does \textbf{not} intersect $\ConvexHull{\PointSet \setminus \ClusterSet{H_i}}$}
			\State $\Vicini{\Hull_i} \leftarrow \Vicini{\Hull_i} \cup \point_j$
		\EndIf
	\EndFor
\EndFor
\ForEach {$\Hull_i \in \HullSetP$}
\State \Call {hull\_set\_propagator}{$\HullSetP, \Next, \Vicini{\Hull_i}, \Vicini{\HSucc{\Hull_i}{\PointSet}}$}
\EndFor
\EndFunction

\Function{hull\_set\_propagator}{$\HullSetP, \Next, \Vicini{\Hull_i}, \Vicini{\HSucc{\Hull_i}{\PointSet}}$}
	\State ${\Phi}_{\Hull_i} \leftarrow \ConvexHull{\Vicini{H_i} \cup \Vicini{\HSucc{\Hull_i}{\PointSet}}}$
	\State ${\Phi}^{'}_{\Hull_i} \leftarrow \ConvexHull{\PointSet \setminus (\ClusterSet{\Hull_i} \cup \ClusterSet{\HSucc{\Hull_i}{\PointSet}})}$
	\If {${\Phi}_{\Hull_i} \cap {\Phi}^{'}_{\Hull_i} = \emptyset$}
\ForEach {$\point_j \in \Vicini{\HSucc{\Hull_i}{\PointSet}}$}
					\State remove $\Vicini{\Hull_{i}}$ from $\Dom{\Next_{\point_j}}$
				\EndFor
\Else
		\State suspend waiting for $\HullSet{{\Phi}^{'}_{\Hull_i}}$ to be modified
	\EndIf
\EndFunction

\end{algorithmic}
\end{algorithm}

 \section{Experimental Evaluation}
\sloppy
\label{sec:experiments}
To assess the effectiveness of the proposed algorithms, we devised 
experiments for both Euclidean \acp{TSP} and Euclidean Generalized \acp{TSP}.
All algorithms are implemented in the \ECLiPSe\ CLP language \cite{ECLiPSe}.
All tests were run on \ECLiPSe\  v. 7.0, build \#48, 
with a time limit of 1800s on Intel\textsuperscript{\textregistered} Xeon\textsuperscript{\textregistered} E5-2630 v3 CPUs running at 2.4GHz,
using only one core and with 2GB of reserved memory.
The \ECLiPSe\ Constraint Programming System is distributed as open-source software\footnote{\url{https://eclipseclp.org/}}.
The implementation uses the suspension and event-based wake-up
facilities provided by \ECLiPSe. These facilities are used to
react to domain modifications or variable instantiation and to
reschedule the propagators only when relevant changes occur.
However, the filtering rules are not specific to \ECLiPSe.
They can be ported to other \ac{CLP} or \ac{CP} systems that support
user-defined propagators, inspection and modification of finite
domains, and event-based activation on domain changes.

\subsection{Euclidean TSP}

Euclidean \ac{TSP} constraint model is based on the successor representation described in the preliminaries (with the \circuit\ constraint and \alldifferent\ \cite{PugetAlldifferent} for improved pruning) with both $\Next$ and \Prev\ variables, that are linked through the {\tt inverse} constraint. In this section we compare different constraint models extending the one presented in the preliminaries by adding additional constraints.

In the following we denote by \noi\ the constraint model that adds the \nocrossing\ constraint,
the removal of aligned points according to Proposition~\ref{thm:three_points_aligned}, and the {\tt clockwise} constraint that implements the propagation described in Section~\ref{sec:convex}.

In order to show that the pruning we provide is not subsumed by that of state-of-the-art techniques, we implemented in \ECLiPSe, in the successor representation, also the  
Held and Karp bound with pruning based on Reduced and Marginal Costs, as proposed by \citep{vanHoeveEtAl_ImprovedWeightedCircuit}
(shown with \loro\ in the following). The purpose of this comparison is to isolate the incremental effect of the proposed geometric propagation within a common \ac{CLP} implementation. 

As 
search strategies, 
we use the \maxregret~\citep{CaseauLaburthe}
and the state-of-the-art \lcfirst~\citep{SalesmanAndTree}, based on {\em Last Conflict}~\citep{LastConflict}. 
As~\citep{vanHoeveEtAl_ImprovedWeightedCircuit}, we also experimented injecting the upper bound given by the \ac{LKH} (v. 2.0.7) algorithm. 

We experimented both with structured and with random (unstructured) instances.

\subsubsection{Structured instances}
\label{subsec:structured}

Structured instances model the position of real locations or sets of points from real-world problems.
We consider instances taken from the TSPLIB~\citep{tsplib}, the Concorde website\footnote{\url{http://www.math.uwaterloo.ca/tsp/world/countries.html}} and the CITIES dataset\footnote{\url{https://people.sc.fsu.edu/~jburkardt/datasets/cities/cities.html}} up to 100 nodes.
These sources provide various types of instances, amongst which Euclidean ones, represented as sets of points in the plane, and geographical ones, represented as sets of points (with latitude and longitude) on the surface of the Earth.
In order to have a more complete set of instances, we decided to consider also geographical instances in which the cities to be visited lie on a limited part of the geoid, so that the geographical distance can be approximated with the Euclidean distance.

Table~\ref{tab:experiments_tsplib} reports the results for the
19 structured instances for which at least one configuration reached
the optimal solution within the 1800-second time limit. Six of the
25 tested instances were omitted because all compared
configurations reached the time limit. Each instance listed in the table is represented by its name, followed by the solving time (in seconds) and the number of search nodes required to find the optimal solution and prove its optimality.
Aside from simple instances, the constraint models that contain geometric filtering are the ones that optimally solve the instances in the shortest time. These results show the positive interaction between the geometric filtering and that carried out by \loro\ alone. By analysing the instances that were not solved to optimality, we found that our additional pruning is more effective during the proof of optimality, rather than on finding good initial solutions.

\newcommand{\timeout}{T/O}
\newcommand{\noditimeout}[1]{\multicolumn{1}{@{}c@{}}{-}}
\newcommand{\noditimeoutmargin}[1]{\multicolumn{1}{@{}c@{}}{-}}
\begin{table}[tbp]
\centering
\caption{\label{tab:experiments_tsplib}
Comparing filtering algorithms on structured instances with time limit 1800s. For each instance we report total solving time and number of explored nodes to reach the optimal solution and prove its optimality.}
{\tablefont\begin{tabular}{@{\extracolsep{\fill}}lrrrrrrrr}
	\topline
	& \multicolumn{4}{c}{\lcfirst} & \multicolumn{4}{c}{\maxregret}\\
	& \multicolumn{2}{c}{\loro} & \multicolumn{2}{c}{\noipiuloro} & \multicolumn{2}{c}{\loro} & \multicolumn{2}{c}{\noipiuloro}\\[4pt]
	instances & \shortstack{CPU\\time} & nodes & \shortstack{CPU\\time} & nodes & \shortstack{CPU\\time} & nodes & \shortstack{CPU\\time} & node
	\midline
	uk12                          & \textbf{0.04}            & 12                                                                         & 0.05                     & 12                                                                         & \textbf{0.04}            & \textbf{1}                                                                 & \textbf{0.04}            & 1                                                                          \\
	burma14                       & 0.06                     & 14                                                                         & 0.07                     & 14                                                                         & \textbf{0.05}            & \textbf{1}                                                                 & 0.07                     & \textbf{1}                                                                 \\
	ulysses16                     & \textbf{0.07}            & 16                                                                         & 0.09                     & 16                                                                         & \textbf{0.07}            & \textbf{1}                                                                 & 0.08                     & 1                                                                          \\
	ulysses22                     & \textbf{0.12}            & 22                                                                         & 0.15                     & 22                                                                         & \textbf{0.12}            & \textbf{1}                                                                 & 0.16                     & \textbf{1}                                                                 \\
	wg22                          & 0.14                     & 22                                                                         & 0.18                     & 22                                                                         & \textbf{0.13}            & \textbf{1}                                                                 & 0.17                     & \textbf{1}                                                                 \\
	bayg29                        & 0.45                     & 30                                                                         & 0.44                     & 35                                                                         & \textbf{0.41}            & 3                                                                          & 0.43                     & \textbf{2}                                                                 \\
	wi29                          & \textbf{0.23}            & 29                                                                         & 0.32                     & 29                                                                         & \textbf{0.23}            & \textbf{1}                                                                 & 0.32                     & \textbf{1}                                                                 \\
	dj38                          & \textbf{0.42}            & 38                                                                         & 0.65                     & 38                                                                         & \textbf{0.42}            & \textbf{1}                                                                 & 0.65                     & \textbf{1}                                                                 \\
	dantzig42                     & 2.16                     & 48                                                                         & \textbf{1.60}            & 43                                                                         & 4.83                     & 28                                                                         & 2.11                     & \textbf{3}                                                                 \\
	att48                         & 3.21                     & 66                                                                         & \textbf{2.38}            & 55                                                                         & 8.02                     & 36                                                                         & 3.10                     & \textbf{10}                                                                \\
	uscap50                       & \textbf{0.64}            & 50                                                                         & 1.22                     & 50                                                                         & \textbf{0.64}            & \textbf{1}                                                                 & 1.22                     & \textbf{1}                                                                 \\
	eil51                         & T/O                      & -                                                                          & 523.92                   & 1384                                                                       & 313.53                   & 916                                                                        & \textbf{84.43}           & \textbf{195}                                                               \\
	berlin52                      & 0.86                     & 52                                                                         & 1.67                     & 52                                                                         & \textbf{0.84}            & \textbf{1}                                                                 & 1.52                     & \textbf{1}                                                                 \\
	kn57                          & 8.53                     & 180                                                                        & 5.67                     & 120                                                                        & 8.24                     & 38                                                                         & \textbf{5.14}            & \textbf{18}                                                                \\
	wg59                          & \textbf{1.13}            & 59                                                                         & 1.92                     & 59                                                                         & 1.35                     & \textbf{1}                                                                 & 1.8                      & \textbf{1}                                                                 \\
	st70                          & 1411.63                  & 3934                                                                       & \textbf{327.79}          & \textbf{985}                                                               & T/O                      & -                                                                          & T/O                      & -                                                                          \\
	eil76                         & 72.56                    & 227                                                                        & 33.20                    & 156                                                                        & 160.09                   & 230                                                                        & \textbf{31.8}            & \textbf{26}                                                                \\
	rat99                         & T/O                      & -                                                                          & T/O                      & -                                                                          & T/O                      & -                                                                          & \textbf{436.97}          & \textbf{574}                                                               \\
	rd100                         & 25.81                    & 117                                                                        & \textbf{21.79}           & 107                                                                        & 52.99                    & 80                                                                         & 21.94                    & \textbf{22}                                                                
	\botline
	\end{tabular}}
	\end{table}

\subsubsection{Random instances}
\label{subsec:random}
Random or unstructured instances consist of points with random coordinates on the plane. 
In the literature~\citep{johnson2007experimental, AdditiveBoundingTSP}, typically two types of instances  are considered: {\em uniform} and {\em clustered}.
Instances of the uniform class are obtained by placing points in the plane according to a uniform distribution,
while clustered instances consist of clusters of points, whose centres are uniformly distributed in the plane.
Random instances have been generated through the R-package {\tt netgen}~\citep{bossek2015netgen} and its functions: {\tt generateRandomNetwork}, {\tt generateClusteredNetwork}.
The function {\tt generateRandomNetwork} generates a random graph by placing points uniformly distributed in a $10^6 \times 10^6$ square.
The function {\tt generateClusteredNetwork} initially generates cluster centers by \ac{LHS} to ensure that the clusters are placed separately from each other. Then it distributes points to the clusters according to a normal distribution having as mean vector the centers of the clusters and as variance the distance to the center of the nearest neighbor cluster.

We randomly generated instances from 20 to 100 nodes in steps of 2, in both classes. 
For each size in the uniform class we generated 32 instances, for a total of 1312 instances. 
For the clustered class we tested $n.cluster = 3, 5, 8, 11$ as number of clusters;
for each size and number of clusters we generated 16 instances, for a total of 2624 instances.
For the uniform class, 276 of the 1312 generated instances
(21.0\%) were excluded from the plots and aggregate statistics
because all compared configurations reached the 1800-second time
limit. For the clustered class, this occurred for 960 of the 2624
generated instances (36.6\%).

In Figure~\ref{graph:uniform_lcfirst}, we present log-log scatter plots and boxplots comparing the \noipiuloro\ model, which includes our filtering algorithm, to the \loro\ model when solving uniform instances with the \lcfirst\ search strategy. Figures \ref{graph:scatter_uniform_lcfirst_time} and \ref{graph:boxplot_uniform_lcfirst_time} compare the solving time, while Figures \ref{graph:scatter_uniform_lcfirst_nodes} and \ref{graph:boxplot_uniform_lcfirst_nodes} compare the search nodes. 
Figure~\ref{graph:scatter_uniform_lcfirst_time} shows that the solving time is significantly reduced when the \noipiuloro\ model is used compared to \loro. The reduction in solving time is evident across all node ranges, with the most substantial improvements observed for larger instances. On average, there is a 70\% reduction in solving time when our filtering is used. Figure~\ref{graph:boxplot_uniform_lcfirst_time} provides a boxplot representation of the solving time across different instances size;
note that, even on a logarithmic scale, the gap between the median solving times increases with the problem size, reaching an order of magnitude for instances with 80 or more nodes, despite the saturating effect of the timeout encountered by the competitor method for larger instance sizes.
Figure \ref{graph:scatter_uniform_lcfirst_nodes} and \ref{graph:boxplot_uniform_lcfirst_nodes} show the number of search nodes required to reach a solution. Similar to the solving time, the \noipiuloro\ model consistently requires fewer search nodes than the \loro\ model, with an average reduction of 59\%.
Notably, this reduction holds even though, in timeout cases, the node count includes only the nodes explored up to the timeout threshold, potentially underestimating the effort required by the \loro\ model and thus disadvantaging the \noipiuloro\ model.

\begin{figure}[htbp]
    \centering
    \begin{subfigure}{.475\textwidth}
	     \includegraphics[width=\textwidth, height=\textwidth]{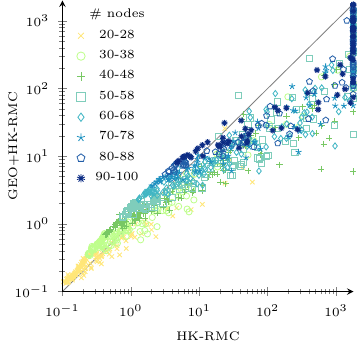}
		 \caption{CPU Time}
		 \label{graph:scatter_uniform_lcfirst_time}
    \end{subfigure}%
    \hfill
    \begin{subfigure}{.475\textwidth}    
	    \includegraphics[width=\textwidth, height=\textwidth]{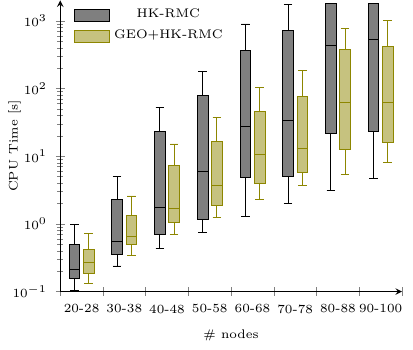}
		 \caption{CPU Time}  
		 \label{graph:boxplot_uniform_lcfirst_time}
	\end{subfigure}
		 
 	\vskip\baselineskip
     \begin{subfigure}{.475\textwidth}
     	\includegraphics[width=\textwidth, height=\textwidth]{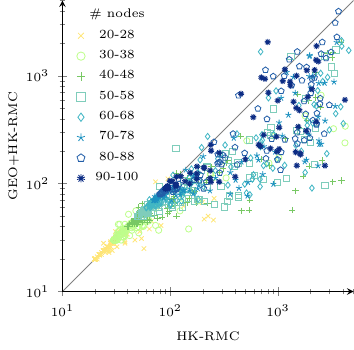}
	 	\caption{Search Nodes}
	 	\label{graph:scatter_uniform_lcfirst_nodes}
    \end{subfigure}%
    \hfill
    \begin{subfigure}{.475\textwidth}    
    	\includegraphics[width=\textwidth, height=\textwidth]{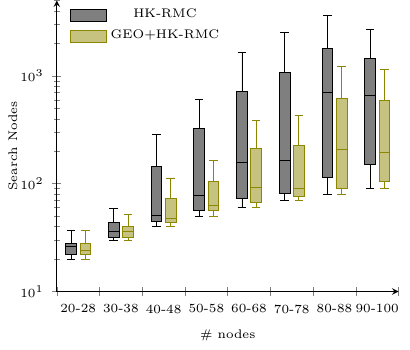}
	 	\caption{Search Nodes}  
	 	\label{graph:boxplot_uniform_lcfirst_nodes}
	 \end{subfigure}

		 \caption{Comparing solving time \ref{graph:scatter_uniform_lcfirst_time}, \ref{graph:boxplot_uniform_lcfirst_time} and search nodes \ref{graph:scatter_uniform_lcfirst_nodes}, \ref{graph:boxplot_uniform_lcfirst_nodes} of \loro\ and \noipiuloro\ models on randomly-generated Euclidean \ac{TSP} instances belonging to the \textbf{uniform} class when \lcfirst\ search strategy is used. The solving time limit was set to 1,800 seconds. Each data point in these figures corresponds to an instance. We removed instances where both models ran into timeout.}
		 \label{graph:uniform_lcfirst}
\end{figure}

In Figure~\ref{graph:uniform_maxregret}, we present similar log-log scatter plots and boxplots comparing the \noipiuloro\ and \loro\ constraint models when using the \maxregret\ search strategy.
As with \lcfirst, the \noipiuloro\ model achieves substantial improvements in both solving time and number of search nodes. Specifically, it reduces solving time by an average of 70\% and the number of search nodes by 71\%, with nearly all instances lying below the diagonal in the scatter plots. Notably, several small instances are solved by the \noipiuloro\ model using just one search node.
Similar issues with timeouts apply here as well. When an instance times out, we only count the nodes explored up to the time limit. This likely underestimates how many nodes the \loro\ model would actually need, making the comparison unfairly harder for our model.

\begin{figure}[htbp]
    \centering
    \begin{subfigure}{.475\textwidth}
	     \includegraphics[width=\textwidth, height=\textwidth]{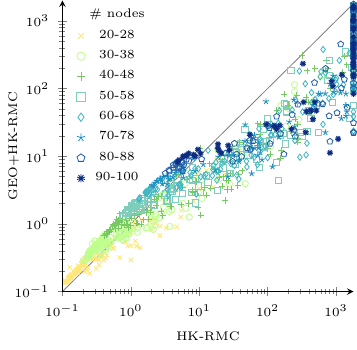}
		 \caption{CPU Time}
		 \label{graph:scatter_uniform_maxregret_time}
    \end{subfigure}%
    \hfill
    \begin{subfigure}{.475\textwidth}    
	    \includegraphics[width=\textwidth, height=\textwidth]{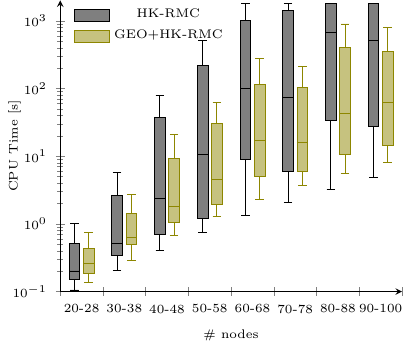}
		 \caption{CPU Time}  
		 \label{graph:boxplot_uniform_maxregret_time}
		 \end{subfigure}
		 
		 \vskip\baselineskip
	     \begin{subfigure}{.475\textwidth}
            \includegraphics[width=\textwidth, height=\textwidth]{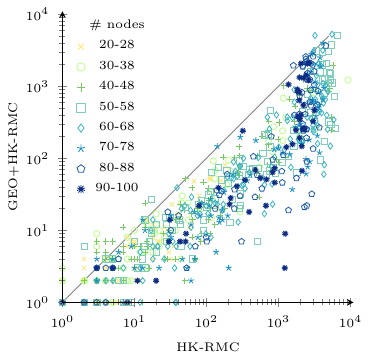}
			 \caption{Search Nodes}
			 \label{graph:scatter_uniform_maxregret_nodes}
		    \end{subfigure}%
		    \hfill
		    \begin{subfigure}{.475\textwidth}    
			\includegraphics[width=\textwidth, height=\textwidth]{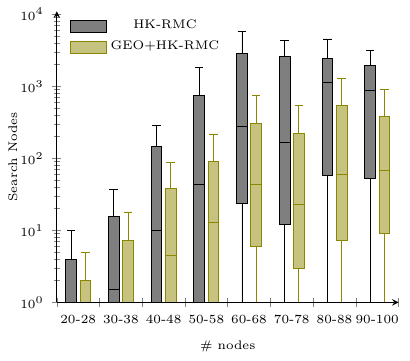}
	 \caption{Search Nodes}  
	 \label{graph:boxplot_uniform_maxregret_nodes}
	 \end{subfigure}

		 \caption{Comparing solving time \ref{graph:scatter_uniform_maxregret_time}, \ref{graph:boxplot_uniform_maxregret_time} and search nodes \ref{graph:scatter_uniform_maxregret_nodes}, \ref{graph:boxplot_uniform_maxregret_nodes} of \loro\ and \noipiuloro\ models on randomly-generated Euclidean \ac{TSP} instances belonging to the \textbf{uniform} class when \maxregret\ search strategy is used. The solving time limit was set to 1,800 seconds. Each data point in these figures corresponds to an instance. We removed instances where both models ran into timeout.}
		 \label{graph:uniform_maxregret}
\end{figure}

Table~\ref{tab:uniform_timeouts} presents the number of instances that reached the timeout limit of 1800 seconds during the experiments for various configurations of nodes and constraint models. The results indicate that for small instances (20 to 38 nodes), both algorithms performed similarly well with almost no instances reaching the timeout. As the dimension increased, the \noipiuloro\ algorithm consistently outperformed \loro, particularly with the \lcfirst\ search strategy. For example, in the 90-100 node range, \noipiuloro\ with \lcfirst\ had 119 timeouts compared to 144 of \loro.

\FloatBarrier
\begin{table}[htb]
	\centering
	\caption{Number of uniform instances that were not solved within the timeout during experiments grouped by size (shown as nodes range). The experiments compared the \loro\ and \noipiuloro\ constraint models, with the \maxregret\ and \lcfirst\ search strategies.}
	{\tablefont\begin{tabular}{@{\extracolsep{\fill}}ccccc}
	\topline
	         & \multicolumn{2}{c}{\maxregret}                             & \multicolumn{2}{c}{\lcfirst}                               \\
	\# nodes & \loro & \noipiuloro & \loro & \noipiuloro 
	\midline
	20-28                        & \textbf{0}                & \textbf{0}                    & \textbf{0}                & \textbf{0}                    \\
	30-38                        & 1                         & \textbf{0}                    & 5                         & 1                             \\
	40-48                        & 4                         & \textbf{1}                    & 12                        & 3                             \\
	50-58                        & 34                        & \textbf{13}                   & 35                        & 26                            \\
	60-68                        & 57                        & \textbf{29}                   & 51                        & 34                            \\
	70-78                        & 90                        & 68                            & 84                        & \textbf{65}                   \\
	80-88                        & 111                       & 78                            & 105                       & \textbf{68}                   \\
	90-100                       & 156                       & 131                           & 144                       & \textbf{119}                  \botline
	\end{tabular}}
	\label{tab:uniform_timeouts}
	\end{table}

Experiments on clustered instances, as shown in Figure~\ref{graph:scatter_clustered}, confirm that the use of geometric filtering significantly reduces the average solving time by 70\% when compared to the \loro\ model alone.

Figure~\ref{graph:scatter_clustered_lc_first_time} presents a log-log scatter plot comparing the solving time of the \noipiuloro\ model with the \loro\ model across different node ranges and cluster counts when \lcfirst\ search strategy is used. Each point represents an instance, and the results show a clear reduction in solving time for the \noipiuloro\ model.
Figure~\ref{graph:scatter_clustered_lc_first_nodes} shows a similar scatter plot but focuses on the number of search nodes. The \noipiuloro\ model consistently requires fewer search nodes to reach a solution, indicating more efficient search performance.

Figures \ref{graph:scatter_clustered_maxregret_time} and \ref{graph:scatter_clustered_maxregret_nodes} replicate the analysis using the \maxregret\ search strategy. Figure \ref{graph:scatter_clustered_maxregret_time} shows the solving time comparison, while Figure \ref{graph:scatter_clustered_maxregret_nodes} presents the number of search nodes. The patterns observed are similar to those with the \lcfirst\ strategy, further validating the robustness of the \noipiuloro\ model.

Specifically, the \noipiuloro\ model provided substantial reductions in the number of explored nodes: a 43\% reduction with the \lcfirst\ strategy and a 75\% reduction with the \maxregret\ search strategy.
From Figure~\ref{graph:scatter_clustered}, it can also be observed that the number of clusters does not significantly affect the solving performance of the two constraint models. This suggests that the improvements provided by the geometric filtering are robust across varying cluster configurations.

	\begin{figure}[htbp]
		\centering
		\begin{subfigure}{.475\textwidth}
		\centering 
			\includegraphics[width=\textwidth, height=\textwidth]{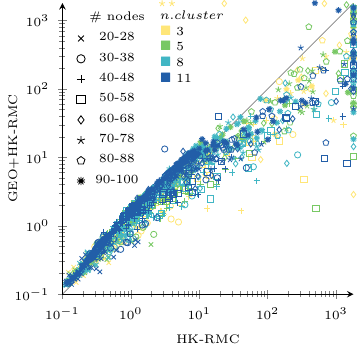}
	 \caption{CPU Time [s]}
		\label{graph:scatter_clustered_lc_first_time}
		\end{subfigure}%
		\hfill 
		\begin{subfigure}{.475\textwidth}
		\centering     
			\includegraphics[width=\textwidth, height=\textwidth]{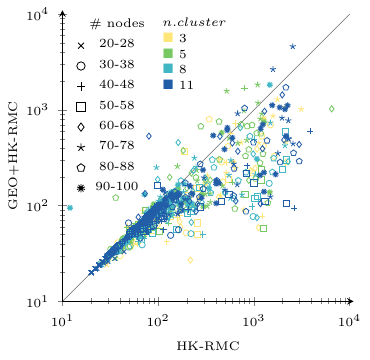}
			\caption{Search Nodes}
			\label{graph:scatter_clustered_lc_first_nodes}
		\end{subfigure}
	\vskip\baselineskip	
		
			\begin{subfigure}{.475\textwidth}
			\centering 
			\includegraphics[width=\textwidth, height=\textwidth]{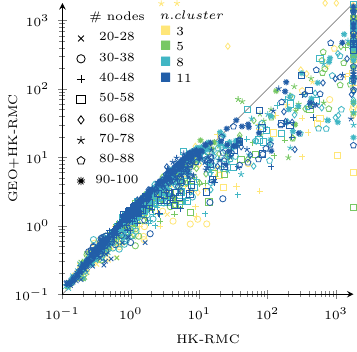}
	 \caption{CPU Time [s]}
		\label{graph:scatter_clustered_maxregret_time}
		\end{subfigure}%
		\hfill 
		\begin{subfigure}{.475\textwidth}   
			\centering 
			\includegraphics[width=\textwidth, height=\textwidth]{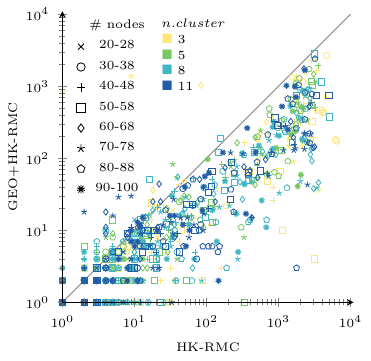}
			\caption{Search Nodes}
			\label{graph:scatter_clustered_maxregret_nodes}
		\end{subfigure}
		
			\caption{Comparing solving time and search nodes of \loro\ and \noipiuloro\ models on randomly-generated Euclidean \ac{TSP} instances belonging to the \textbf{clustered} class varying the number of clusters (shown by the different colors). 
			 Figures \ref{graph:scatter_clustered_lc_first_time} and \ref{graph:scatter_clustered_lc_first_nodes} show the results obtained using the \lcfirst\ search strategy while Figures \ref{graph:scatter_clustered_maxregret_time} and \ref{graph:scatter_clustered_maxregret_nodes} show the results obtained using the \maxregret\ search strategy.
			 The solving time limit was set to 1,800 seconds. We removed instances where both models ran into timeout.}
		\label{graph:scatter_clustered}
	\end{figure}

Table~\ref{tab:clustered_timeouts} presents the number of clustered instances that reached the timeout limit of 1800 seconds during the experiments, categorized by different configurations of clusters, dimensions, and constraint model.
The trend concerning the instance size mirrors that observed in the uniform instances. For smaller instances (up to 68 nodes), the \maxregret\ strategy generally performs better. However, for larger instances, the \lcfirst\ strategy shows superior performance. As for the constraint models, the \noipiuloro\ model consistently outperforms the \loro\ model across different configurations.
When analyzing the number of clusters, a clear pattern emerges: instances with a higher number of clusters are generally easier to solve, as reflected by the reduced number of timeouts. This suggests that increasing the number of clusters simplifies the problem structure, making instances more tractable within the given time limit.

	\begin{table}[htb]
		\centering
		\caption{Number of clustered instances that reached the timeout limit of 1800 seconds during the experiments grouped by different number of clusters and size (shown as nodes range). The experiments compared the \loro\ and \noipiuloro\ algorithms, with the \maxregret\ and \lcfirst\ search strategies.}
		{\tablefont\begin{tabular}{@{\extracolsep{\fill}}cccccc}
		\topline
							&          & \multicolumn{2}{c}{\maxregret} & \multicolumn{2}{c}{\lcfirst} \\
		\# clusters         & \# nodes & \loro         & \noipiuloro     & \loro        & \noipiuloro    
		\midline
		\multirow{8}{*}{3}  & 20-28    & \textbf{0}    & \textbf{0}    & \textbf{0}   & \textbf{0}   \\
							& 30-38    & 9             & \textbf{4}    & 11           & 8            \\
							& 40-48    & 28            & \textbf{23}   & 30           & 28           \\
							& 50-58    & 55            & \textbf{50}   & 52           & 51           \\
							& 60-68    & 67            & \textbf{63}   & 64           & \textbf{63}  \\
							& 70-78    & 74            & 72            & 67           & \textbf{67}  \\
							& 80-88    & 78            & 76            & 76           & \textbf{75}  \\
							& 90-100   & 96            & 95            & 91           & \textbf{90}  \\ \hline
		\multirow{8}{*}{5}  & 20-28    & \textbf{0}    & \textbf{0}    & \textbf{0}   & \textbf{0}   \\
							& 30-38    & \textbf{0}    & \textbf{0}    & \textbf{0}   & \textbf{0}   \\
							& 40-48    & 1             & \textbf{0}    & 2            & 1            \\
							& 50-58    & 16            & \textbf{12}   & 15           & 13           \\
							& 60-68    & 27            & \textbf{25}   & \textbf{25}  & \textbf{25}  \\
							& 70-78    & 44            & 38            & 35           & \textbf{31}  \\
							& 80-88    & 61            & 50            & 53           & \textbf{45}  \\
							& 90-100   & 85            & 78            & 80           & \textbf{77}  \\ \hline
		\multirow{8}{*}{8}  & 20-28    & \textbf{0}    & \textbf{0}    & \textbf{0}   & \textbf{0}   \\
							& 30-38    & \textbf{0}    & \textbf{0}    & \textbf{0}   & \textbf{0}   \\
							& 40-48    & \textbf{0}    & \textbf{0}    & \textbf{0}   & \textbf{0}   \\
							& 50-58    & 2             & \textbf{0}    & 2            & \textbf{0}   \\
							& 60-68    & 11            & \textbf{5}    & 10           & \textbf{5}   \\
							& 70-78    & 14            & 10            & 15           & \textbf{8}   \\
							& 80-88    & 36            & 29            & 30           & \textbf{25}  \\
							& 90-100   & 38            & 34            & 36           & \textbf{32}  \\ \hline
		\multirow{8}{*}{11} & 20-28    & \textbf{0}    & \textbf{0}    & \textbf{0}   & \textbf{0}   \\
							& 30-38    & \textbf{0}    & \textbf{0}    & \textbf{0}   & \textbf{0}   \\
							& 40-48    & \textbf{0}    & \textbf{0}    & \textbf{0}   & \textbf{0}   \\
							& 50-58    & 2             & \textbf{0}    & 2            & \textbf{0}   \\
							& 60-68    & 4             & \textbf{2}    & 10           & 5            \\
							& 70-78    & 18            & 9             & 15           & \textbf{8}   \\
							& 80-88    & 20            & \textbf{11}   & 30           & 25           \\
							& 90-100   & 45            & 40            & 36           & \textbf{32}  \botline
		\end{tabular}}
		\label{tab:clustered_timeouts}
		\end{table}

\subsection{Euclidean Generalized TSP}
The instances for the GTSP in the literature~\citep{LAPORTE1981, LAPORTE1987, DBLP:journals/ior/FischettiGT97} 
are normally created from existing structured (see Section~\ref{subsec:structured}) or random (see Section~\ref{subsec:random}) instances.
Starting from random \ac{TSP} instances, we experimented with three approaches to partition points into clusters.

Once a number of clusters $c$ is fixed, the {\em uniform} approach (inspired by \citep{LAPORTE1981, LAPORTE1987}) distributes with uniform probability the points among the clusters.

The \emph{clustering} approach was proposed by~\citet{DBLP:journals/ior/FischettiGT97}. 
For an instance with $n$ points, the number of clusters is set to $c = \lceil n/5\rceil$. Then $c$ points, that will represent the centres of the clusters, are selected to be as far away as possible from each other. The remaining points are then associated with the nearest center.

The  \emph{grid} approach was also proposed by~\citet{DBLP:journals/ior/FischettiGT97}.
Consider the minimum rectangle with sides parallel to the coordinate axes enclosing all points in the instance.
The rectangle is subdivided in an $L \times L$ grid, and each grid cell containing at least one point is a cluster. 
The value $L$ is determined so that each cluster contains an average number $\mu$ of nodes, 
let $\mbox{CLUSTER}(H)$ be the number of clusters corresponding to an $H \times H$ grid, $L$ is defined as the smallest integer such that $\mbox{CLUSTER}(L) \geq n/\mu$.

As in the case of the \ac{ETSP}, we compare two constraint models based on the successor representation. 
The first model, denoted by \reference, includes the basic model for the \ac{EGTSP} described in Equation~\ref{eq:set-constraintmodel}. The second model, called \noi, adds the geometric pruning based on the convex hull introduced in Section~\ref{sec:egtsp_hull}. For all experiments we used the {\em max-regret}~\cite{CaseauLaburthe} search strategy.

We randomly generated instances from 20 to 50 nodes, using the three previously introduced approaches. For instances generated using the grid approach we set the parameter $\mu = 5$.
For each size and each approach we generated 16 instances, for a total of 1488 instances.
Of these 1488 instances, 424 (28.5\%) were excluded from the plots
and aggregate statistics because all compared configurations reached
the 1800-second time limit.

Figure~\ref{graph:egtsp_cactus_plot} compares the experimental results of the two models using a cactus plot, showing the number of solved instances as a function of the CPU time. The \noi\ model consistently outperforms the reference \reference\ model by solving more instances within the same time limit.

In addition to solving a higher number of instances, the \noi\ model also achieves a consistent reduction in both computation time and number of explored nodes across all instance types. 
If we consider instances generated with the clustering approach the average solving time with the \noi\ model is reduced by 76\% and the average number of nodes explored is reduced by 77\%. For grid type instances the results are similar as the reduction of the average solving time results 67\% together with a reduction of the average number of explored nodes by 66\%. 

For uniform type instances the average solving time is reduced by 19\% and the average number of search nodes by 17\%. These slightly lower results can be attributed to the distribution of points that makes it difficult to have sets of neighbours of significant size in the instances. Since the filtering algorithm relies on sets of neighbours their absence weakens the pruning. However, the results overall show that our approach is definitely useful in improving solving performance.

\begin{figure}[tb]
\centering
		\includegraphics[width=\textwidth]{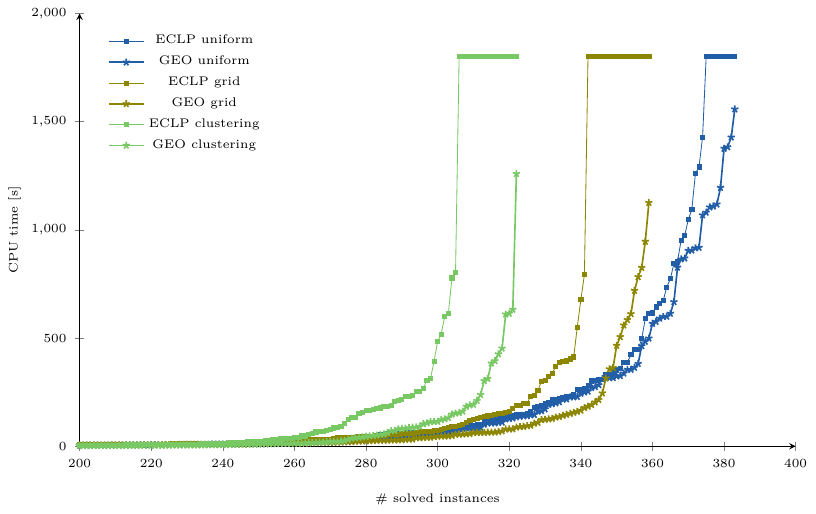}
		\caption{Comparing number of solved instances by \reference\ and \noi\ models on Euclidean \ac{GTSP} instances generated using different approaches to partition points into clusters. The solving time limit was set to 1,800 seconds. We removed instances where both models ran into timeout.}
		\label{graph:egtsp_cactus_plot}
\end{figure}

Overall, the main practical benefit of the proposed geometric
propagators is a reduction of the search space, rather than merely a
lower constant cost per search node. On the random \ac{ETSP}
instances, the number of explored nodes is reduced by 43--75\%,
depending on the instance class and search strategy. This reduction
is accompanied by lower solving times and fewer timeouts, especially
for the larger tested instances. A similar behaviour is observed for
the \ac{EGTSP}, where the geometric model explores fewer nodes and
solves more instances within the time limit. The complexity
improvement discussed for the \nocrossing\ propagator concerns its
implementation relative to a naive propagator; we do not claim an
improvement in the asymptotic complexity of solving the \ac{TSP}.

 \section{Conclusions}
\label{sec:conclusion}

In this article, we showed the effectiveness of the use of geometric information in routing problems.
While the most common approach in Constraint Programming and related research areas is to address the Euclidean Traveling Salesperson Problem with the same constraint model as the general TSP,
we showed that the use of the additional information naturally available in many TSP instances can provide further pruning and speed up the search.
In fact, in the \ac{ETSP} the coordinates of the points in the plane are known, and while the common approach is to use such knowledge only to compute the distance matrix and disregarding it in the constraint model, we employed it to strengthen the constraint model with additional redundant constraints that effectively reduce the search space.

This article is a revised and extended version of a conference
publication~\citet{BertagnonGavanelliAAAI20}, which, to the best of
our knowledge, was the first work to exploit the classical
no-crossing and convex-hull ordering properties through dedicated
propagation mechanisms in a successor-based \ac{CP}/\ac{CLP} model
for the Euclidean TSP.

However, while~\citet{BertagnonGavanelliAAAI20} showed the effectiveness of the idea only in the TSP problem (without any variation that can occur in real life problems),
in this article 
we showed how the use of geometric information can be exploited also in the \ac{EGTSP}, achieving a speedup of up to 4.2$\times$ in solving time and a reduction by a similar factor in the number of explored nodes.
We provided a comprehensive and detailed exposition of the techniques we developed;
a thorough and systematic experimentation campaign showed the effectiveness of the proposed algorithms.

As no CP models for the \ac{EGTSP} were found in the literature, we proposed a first constraint model; 
again, the addition of geometric reasoning let us reduce the average solving time to $1/3$ on clustered instances and to $4/5$ on uniform instances.

Our results are not yet comparable with those obtained by state-of-the-art TSP solvers, such as Concorde, but on the other hand Concorde cannot solve \ac{GTSP} instances. Also note that the implementation of the proposed algorithms is based on a declarative approach and on the open-source \acl{CLP} solver \ECLiPSe. Indeed, the aim of this research was to show that exploiting geometric information can result in stronger pruning rather than proposing a very fast and efficient algorithm which would probably have required an implementation in an imperative language.
We believe that the same techniques can nevertheless also be exploited in solvers implemented in imperative languages.

One limitation of the current work is that it is not universally applicable to all \ac{TSP} variants, such as the \acl{TSPTW}, since its optimal solutions may contain crossings; but crossings avoidance could also be interesting \emph{per se} in some
applications. 
As future work, we plan to investigate extensions of the proposed
techniques to other routing models in which the geometric structure
and the feasible connections preserve the assumptions required by
the proposed pruning.

\section*{Acknowledgements}
Alessandro Bertagnon and Marco Gavanelli are members of the Gruppo Nazionale Calcolo
Scientifico-Istituto Nazionale di Alta Matematica (GNCS-INdAM).

\section*{Competing interests}
The authors declare none.

\bibliographystyle{tlplike}
\bibliography{cited.bib}

\end{document}